\theoremstyle{plain}
\newtheorem{theorem}{Theorem}
\newtheorem*{theorem*}{Theorem}
\newtheorem{lemma}{Lemma}
\newtheorem{definition}{Definition}
\newtheorem{assumption}{Assumption}
\newcommand{\Exp}{\mathop{\mathbb{E}}}
\newcommand{\supp}{\operatorname{supp}}
\newcommand{\E}{\mathop{\mathbb{E}}}
\def\REV#1{{\color[HTML]{000000}#1}}
\newenvironment{REV*}
  {\begingroup\color[HTML]{000000}}
  {\endgroup}
\title{A universal compression theory for lottery ticket hypothesis and neural scaling laws}
\author{Hong-Yi Wang \\
Princeton University \& NTT Research \\   
\texttt{hywang@princeton.edu} \\
\And 
Di Luo \\
Tsinghua University \& UCLA\\
\texttt{diluo@tsinghua.edu.cn} \\
\AND
Tomaso Poggio \\
MIT\\
\texttt{tp@ai.mit.edu} \\
\And
Isaac L. Chuang \\
MIT\\
\texttt{ichuang@mit.edu} \\
\And
Liu Ziyin \\
MIT \& NTT Research \\
\texttt{ziyinl@mit.edu}
}
\begin{document}

\maketitle

\begin{abstract}
\vspace{-2mm}
    When training large-scale models, the performance typically scales with the number of parameters and the dataset size according to a slow power law. A fundamental theoretical and practical question is whether comparable performance can be achieved with significantly smaller models and substantially less data. In this work, we provide a positive and constructive answer. We prove that a generic permutation-invariant function of $d$ objects can be asymptotically compressed into a function of $\operatorname{polylog} d$ objects with vanishing error, which is proved to be the optimal compression rate. This theorem yields two key implications: (Ia) a large neural network can be compressed to polylogarithmic width while preserving its learning dynamics; (Ib) a large dataset can be compressed to polylogarithmic size while leaving the loss landscape of the corresponding model unchanged. Implication (Ia) directly establishes a proof of the \textit{dynamical} lottery ticket hypothesis, which states that any ordinary network can be strongly compressed such that the learning dynamics and result remain unchanged. (Ib) shows that a neural scaling law of the form $L\sim d^{-\alpha}$ can be boosted to an arbitrarily fast power law decay, and ultimately to $\exp(-\alpha' \sqrt[m]{d})$.
\end{abstract}

\vspace{-2mm}
\section{Introduction}
\vspace{-1mm}

Training contemporary neural networks has become extremely costly. Modern models are very large and are often trained on enormous datasets. For example, GPT-4 is believed to have on the order of a trillion ($10^{12}$) parameters and to have been trained on roughly a trillion tokens. Training runs can occupy clusters comparable in scale to an entire data center. By contrast, the brain, a comparable biological computer, appears to require far less data. A rough back-of-the-envelope estimate illustrates the gap: suppose the auditory system of a person receives one word per second from birth. Then by age ten, a child would have heard about $10^8$ words, by which time most children have mastered their native language. This difference of roughly four orders of magnitude in data efficiency between artificial and biological systems suggests that current AI systems may not be using data optimally.

The data efficiency of large AI models is often summarized by neural scaling laws (NSL), in which the error $L$ decays approximately as a power of the dataset size (holding other factors fixed):
\begin{equation}
    L(N) \propto N^{-\alpha}.
\end{equation}
Empirical values of $\alpha$ for large language models typically lie between $0.1$ and $0.3$ \citep{kaplan2020scaling}. If we assume $\alpha=0.1$ and that current models already achieve human-like language capability, then attaining the same capability with only $10^8$ tokens would require a modestly larger exponent, roughly $\alpha \approx 0.15$. Hence, even a small increase in the scaling exponent could substantially reduce training cost by bringing models closer to human-level data efficiency. Yet we currently lack principled guidance on whether, and by how much, the neural scaling laws can be improved.

In this work, we prove a universal result that, under suitable definitions of model width, enables sub-exponential compression (i.e., from $d$ objects to $\operatorname{polylog}(d)$) of essentially arbitrary neural networks and/or their training data, thereby opening the possibility of substantially improving a wide range of neural scaling laws. Our contributions are:
\begin{enumerate}[noitemsep,topsep=0pt,parsep=0pt,partopsep=0pt,leftmargin=16pt]
    \item Proof of a universal compression theorem, showing by construction that almost any smooth symmetric function of $d$ elements can be compressed to a function with $O(\operatorname{polylog}(d))$ elements losslessly (Section \ref{sec: theory}). Moreover, such a compression rate is optimal up to a constant factor.
    \item Application to network compression, which leads to a proof of what we call the dynamical lottery ticket hypothesis (LTH), which states that a large network can be compressed such that its training dynamics is the same as the original (Section \ref{sec:LTH});
    \item Application to compress datasets, which is a proof-of-concept showing that one can improve neural scaling laws significantly (Section \ref{sec:NSL}).
\end{enumerate}
Figure~\ref{fig:illustration} provides a schematic of the main idea. All proofs appear in the Appendix; numerical experiments are presented alongside the related theoretical results.


\begin{figure}
\vspace{-1em}
    \centering
    \includegraphics[width=0.85\linewidth]{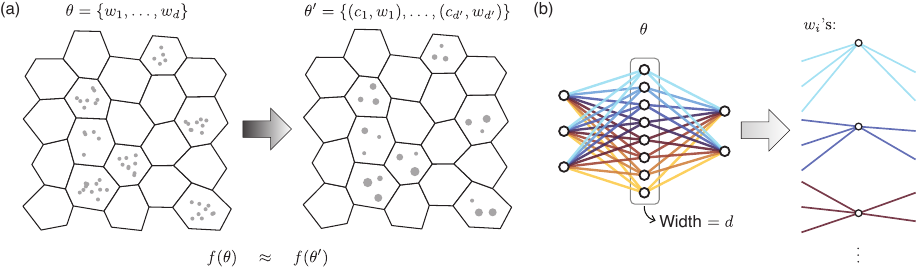}
    \vspace{-1mm}
    \caption{\small 
    (a) Illustration of the main idea behind the compressibility of neural networks and datasets. 
    (1) Permutation symmetry allows a high-dimensional function to be decomposed into a composition of $d$ low-dimensional ``objects'' (dots in the figure). 
    (2) When $d$ is large, these objects become crowded, and those lying in denser regions are essentially redundant; they can be compressed into $d' = O(\operatorname{polylog} d)$ objects. 
    The potential curse of dimensionality can thus be mitigated, or even removed, when the underlying function is smooth—a lesson well known in nonparametric statistics. 
    (b) Decomposing the linear weights of a neural network into ``objects'' of symmetric status.
    }
    \label{fig:illustration}
    \vspace{-2mm}
\end{figure}

\vspace{-2mm}
\section{Problem Setting}
\label{sec:setting}
\vspace{-1mm}


We begin with two motivating examples to illustrate the ubiquity of permutation symmetry in machine learning, and then introduce the permutation-symmetric functions we study. 

\vspace{-2mm}
\paragraph{Data Permutation Symmetry.} The simplest form of permutation symmetry exists among data points. The loss function we minimize is $L (\theta, \{z_i\}_i^d)$, where $d$ is the number of data points, and $z_i=(x_i, y_i)$ is a data point consisting of input--label pairs. The loss function is the average of a per-sample loss $\ell$ over training data:
\begin{equation}
    L = \frac{1}{d}\sum_{i=1}^d\ell(x_i,y_i,\theta) \equiv \E_z[\ell(x,y,\theta)].
\end{equation}
Because the loss function $L$ is a sum over the same function $\ell$ of each data point, permuting any pair of data points results in the same value of loss function. Moreover, since the dataset affects any trained model prediction via the gradient of the loss function, all model predictions are thus naturally permutation invariant. 

\vspace{-2mm}
\paragraph{Neuron Permutation Symmetry.} Permutation symmetry in model parameter space depends on the structure of the model submodules. Consider the model $f(x) = W_2 \sigma( W_1 x )$, where $W_1$ and $W_2$ are weight matrices, and $\sigma$ is an element-wise non-linearity. Let $w_i^T$ be the row vectors of $W_1$, and $v_i$ be the column vectors of $W_2$. Then the output reads
\begin{equation}    \label{eq:NN_output}
    f(x) = \sum_{i=1}^d v_i \sigma( w_i^T x )
\end{equation}
where $d$ is the output dimension of $W_1$, also the input dimension of $W_2$. $d$ is commonly known as the \emph{width} of the neural network. The output is symmetric under the exchange of any pair of $(v_i, w_i) \leftrightarrow (v_j, w_j)$, another example of permutation symmetry. For a deep net, different layers can have multiple decoupled permutation symmetries. More generally, other modules that have such permutation symmetry include fully connected layers, attention logits in self-attention, and attention outputs between different attention heads; it also exists when ensembling many models with the same architecture \citep{brea2019weight, ziyin2025parameter}. \REV{Particularly, we further elaborate on the permutation symmetry in attention modules in Appendix~\ref{app:attention}. }

\vspace{-2mm}
\paragraph{General Permutation Symmetries.} Since our theory concerns compressing the objects while preserving the values of such symmetric functions, this framework is a unified approach to compressing either training data or any permutation-symmetric parameter set of a learning model. Generally, one can view the model or loss as a function of a set of permutation-symmetric inputs, while keeping other inputs implicit. We refer to each such input as an \emph{object}, which may correspond to a data point or to a neuron weight $(w_i)$. In this paper, each object is embedded in $m$ dimensions, and $d$ is the number of such objects. We primarily consider the limit $d \to \infty$. 

\begin{definition}
    Let each $w_i \in V = \mathbb{R}^m$. $f: V^d \to \mathbb{R}$ is called a (permutation-)symmetric function in $\{w_i\}$ if, for any distinct $i,j \in [d]$, $f(\dots, w_i, \dots, w_j, \dots) \;=\; f(\dots, w_j, \dots, w_i, \dots)$.
\end{definition}

We also use $\theta = (w_1, \dots, w_d)$ as a collective notation for all objects. 
To analyze the error induced by compression, we impose a mild regularity assumption on $f$. Specifically, it is known that any symmetric function admits a ``deep set''--style universal representation \citep{zaheer2018deepsets} of the form
\begin{equation}    \label{eq:deep_set_rep}
    f(w_1, \dots, w_d) \;=\; h \left( \sum_{i=1}^d g(w_i) \right),
\end{equation}
where \REV{$h$} and $g$ are suitable functions. Importantly, for smooth $f$, one can choose $h$ and $g$ to be smooth as well \citep{tabaghi2023universalrep}. We will use the following regularity assumption for the symmetric functions considered in this paper.
\begin{assumption}\label{assump:analytic_deepsets}
    For all symmetric functions $f(\theta)$ studied in this paper, we assume that there exists a deep--set representation of $f$ as in Eq.~\eqref{eq:deep_set_rep}, such that
    \begin{enumerate}
        \item Neither $h$ nor $g$ depends on $d$;
        \item $h$ and $g$ are both Taylor-expandable with finite radii of convergence. 
    \end{enumerate}
\end{assumption}
While many ML models are non-smooth (e.g., ReLU networks), our compression results often extend to such settings at the level of conclusion (concretely, ReLU networks are studied in the numerical results in Figs.~\ref{fig:compress_trainds}, \ref{fig:LTH} and \ref{fig:NSL}), suggesting broader applicability beyond the analytic regime treated formally here.


\vspace{-2mm}
\paragraph{Notation.}

Let $V = \mathbb{R}^m$ denote the space in which each object $w_i$ is embedded. $\otimes$ represents tensor product. The shorthand $[d]$ refers to the index set $\{1,2,\dots,d\}$, but when $x$ is a non-integer real number, $[x]$ denotes its closest integer. $S_d$ denotes the symmetric group of $d$ elements. For a nonnegative weight vector $\{c_i\}_{i=1}^d$, the support is defined as $\supp(c_i) \equiv \{ i \in [d] \mid c_i \neq 0 \}$. For a set $S \subseteq V$, the diameter is $\operatorname{diam}(S) \equiv \max_{x,x'\in S} \|x-x'\|$, where we use the Euclidean norm throughout this paper. $\mathcal{N}(\mu, \sigma^2)$ denotes the normal distribution with mean $\mu$ and variance $\sigma^2$. All other notations will be introduced in context.

\vspace{-2mm}
\section{Related Works}\label{sec: related works}
\vspace{-1mm}



\paragraph{Compression in AI.} Model and dataset compression has long been a central problem in AI \citep{han2015deep, frankle2018lottery, sorscher2022beyond, salomon2002data, wang2018dataset}. Yet, almost no theoretical framework exists to explain why, or to what extent, such compression is possible. A primary conceptual framework is the lottery ticket hypothesis (LTH) \citep{frankle2018lottery}, which posits that within every network there exists a small subnetwork that, when retrained, can achieve the same performance as the original. Several theoretical works have established variants of the LTH \citep{malach2020proving, pensia2021optimallotteryticketssubsetsum, da2022proving}. However, these results typically fail to imply that the compressed model exhibits the same learning dynamics as the original—that is, that it reaches the same performance after training—which is arguably the most practical implication of the LTH. To date, the original formulation of the LTH remains unproven, precisely because of its dual requirement of both training and compression. We provide a more detailed discussion of this point in Sec.~\ref{sec:LTH}. Another closely related work is \cite{ziyin2024symmetry}, which suggests the connection between symmetries and emergent sparsity during training.

\vspace{-2mm}
\paragraph{Neural Scaling Laws.} A major empirical guideline for training large language models (LLMs) is the neural scaling laws, which state that as the size of models and datasets increases, the generalization error decays as a power law: $L \propto d^{-\alpha}$, with $\alpha$ often small \citep{kaplan2020scaling}. Such small exponents pose a central obstacle for scaling LLMs. For example, when $\alpha = 0.1$, reducing the generalization error by half would require increasing the dataset size by a factor of $1000$—an impractical demand given today’s limited data availability. \cite{sorscher2022beyond} suggests the possibility of improving scaling laws through data pruning; however, their theory applies only to linear regression and assumes knowledge of the ground-truth model. Whether scaling laws can be improved in more general settings, and without requiring access to the ground truth, remains unknown.


\vspace{-2mm}
\section{Universal Compression Theorem}\label{sec: theory}
\vspace{-1mm}

Compression is enabled by the observation that symmetric functions can be characterized by far fewer degrees of freedom than their apparent dimensions, which follows from a variant of the fundamental theorem of symmetric polynomials (FTSP). We then leverage this result to show that a family of compression algorithms ensures asymptotically lossless compression.

\vspace{-2mm}
\subsection{Symmetric Functions are Compressible}
\vspace{-1mm}

The value of a symmetric function does not depend on the specific ordering of $\theta = (w_1, \dots, w_d)$, where each $w_i \in \mathbb{R}^m$. As a simple example, the joint probability distribution of $d$ i.i.d.\ random variables is symmetric in the sampled data points. This i.i.d.\ property underlies classical Shannon compression \citep{CoverInformationTheory}. In this sense, permutation symmetry can be viewed as a natural generalization of independence (see, e.g., \cite{bloem2020probabilistic}), and it is thus natural that the compression of i.i.d.\ variables extends to the more general setting of permutation-symmetric variables. The following theorem shows that it suffices to keep track of the tensorial \emph{statistical moments} of $\theta$, an idea traceable to Newton and Lagrange \citep{blum2017fundamental}. 
The following version of FTSP directly relates multivariate symmetric polynomials to their moments.

\begin{theorem}[Multivariate FTSP]\label{thm:fundamental_multisym}
    Let $\theta = (w_1, \dots, w_d)$ with $w_i \in \mathbb{R}^m$. Any symmetric polynomial $f(\theta)$ (i.e., $f(\theta)$ is permutation invariant and is a polynomial of all scalar components $w_{i,a}$) can be expressed as a function of the moments $p_k$, $k \in [d]$, defined by
    \begin{equation}
        p_k = \frac{1}{d} \sum_i w_i^{\otimes k} \;\equiv\; \frac{1}{d} \sum_i \underbrace{w_i \otimes \dots \otimes w_i}_{k \,\text{repetitions}}.
    \end{equation}
\end{theorem}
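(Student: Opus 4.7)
The plan is to reduce to the classical univariate FTSP through an identification of the tensor moments with polarized power sums, and then use polarization to lift the univariate result to the multivariate case. Concretely, each scalar entry $(p_k)_{a_1,\dots,a_k} = \tfrac{1}{d}\sum_i w_{i,a_1}\cdots w_{i,a_k}$, collected by multiplicity of the indices, equals $\tfrac{1}{d}$ times the multivariate power sum $\pi_\alpha := \sum_i \prod_{a=1}^m w_{i,a}^{\alpha_a}$, where $\alpha \in \mathbb{Z}_{\ge 0}^m$ has $|\alpha| = k$. Thus the theorem is equivalent to showing that any $S_d$-invariant polynomial in $\{w_{i,a}\}_{i\in[d],a\in[m]}$ lies in the subring generated by $\{\pi_\alpha : 1\le |\alpha|\le d\}$---the classical multisymmetric fundamental theorem.

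To prove this reduced statement I would polarize. Introduce an auxiliary $m$-vector of indeterminates $t=(t_1,\dots,t_m)$ and the scalar linear forms $y_i(t) := \sum_a t_a w_{i,a}$. Regarded as $d$ scalars over the coefficient ring $\mathbb{R}[t_1,\dots,t_m]$, the univariate FTSP asserts that every $S_d$-invariant polynomial in $(y_1,\dots,y_d)$ is a polynomial in $q_k(t) := \sum_i y_i(t)^k$ for $k\in[d]$, with coefficients in $\mathbb{R}[t_1,\dots,t_m]$. Expansion yields $q_k(t) = \sum_{|\alpha|=k}\binom{k}{\alpha}t^\alpha \pi_\alpha$, so each coefficient of $t^\alpha$ in such an expression is a polynomial in the $\pi_\alpha$. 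To extract an arbitrary multisymmetric $f(w_1,\dots,w_d)$ of total degree $D$, I would introduce independent auxiliary vectors $t^{(1)},\dots,t^{(D)}$, form the $S_d$-invariant generating function $G(t^{(1)},\dots,t^{(D)};w) := \prod_{j=1}^D \prod_{i=1}^d (1 + y_i(t^{(j)}))$, and apply standard multilinear polarization (partial differentiation in the $t^{(j)}$ followed by evaluation at $0$ and symmetrization) to recover $f$ as a specific multilinear coefficient---hence a polynomial in the $\pi_\alpha$'s. That this recovery is possible is itself a classical polarization lemma: multivariate $S_d$-invariants are polarizations of univariate ones.

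The main obstacle is the degree cutoff $|\alpha|\le d$, which a naive polarization argument cannot produce on its own. To establish it I would carry Newton's identities through the $t$-expansion: in the univariate setting Newton's identities express $q_k$ for $k>d$ as a polynomial with rational coefficients in $q_1,\dots,q_d$; substituting the scalars $q_k(t)$ and matching the coefficient of each monomial $t^\alpha$ then expresses every $\pi_\alpha$ with $|\alpha|>d$ as a polynomial in lower-degree $\pi_\alpha$'s. The bookkeeping that simultaneously tracks multidegrees through polarization and through Newton's identities is the technically delicate part, but since every underlying identity is a polynomial relation over $\mathbb{Q}$ independent of the specific $w_i$, the entire reduction is finite and algorithmic, yielding the claim.
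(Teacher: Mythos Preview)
Your plan has two parts: (A) every $S_d$-invariant polynomial lies in the ring generated by the $\pi_\alpha$; (B) the cutoff $|\alpha|\le d$. Part~(B) via Newton's identities on $q_k(t)=\sum_i(t\cdot w_i)^k$ and coefficient-matching in $t$ is correct and matches the paper, which applies the univariate FTSP to $s_k(u)=\tfrac1d\sum_i(u^\top w_i)^k$ and reads off the tensor statement from the resulting polynomial identity in $u$.

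Part~(A) has a real gap. The $t$-coefficients of your fixed generating function $G=\prod_{j}\prod_i(1+y_i(t^{(j)}))$ are precisely the products $e_{\beta_1}\cdots e_{\beta_D}$ of elementary multisymmetric functions $e_\beta$ (the $t^\beta$-coefficients of $\prod_i(1+t\cdot w_i)$). Differentiating $G$ at $0$ therefore only ever produces elements of the subalgebra generated by the $e_\beta$; it cannot ``recover an arbitrary $f$'' unless you already know that $f$ lies in that subalgebra---which is the elementary-function form of the multisymmetric FTSP and exactly as hard as the power-sum form you are trying to prove. The ``classical polarization lemma'' you invoke (that multivariate $S_d$-invariants are polarizations of univariate ones) \emph{is} the theorem in question; Weyl-type polarization only bounds the number of vector copies by $\dim V=d$, not by $1$, so it does not furnish this reduction. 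By contrast, the paper dispatches (A) with a short self-contained induction on the number $N$ of distinct indices in the symmetrized monomial $\sum_{\mathcal X(i_1,\dots,i_N)}w_{i_1}^{\alpha_1}\cdots w_{i_N}^{\alpha_N}$, peeling off one index at a time against a single $q_{\alpha}$---an argument that needs no external invariant-theory input.
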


If $f(\theta)$ is a symmetric polynomial of degree $k$, then, $f(\theta)$ is fully determined by the first $k$ moments. Thus, any change to $\theta$ that preserves these moments leaves $f(\theta)$ unchanged. In other words, the variables can be compressed into the size of their leading $k$ moments as a linear space. The following theorem by Tchakaloff gives a direct upper bound on the number of elements required for the compression.

\begin{theorem}[\cite{tchakaloff1957formules}]\label{thm:tchakaloff}
    Let $\mu$ be a measure supported on $D \subset \mathbb{R}^m$. Then there exist $N$ points $w_j \in D$, with $N \leq N_{m,k} = \binom{m+k}{k}$, and positive weights $c_j$, such that the first $k$ moments are matched: $\forall l \in\{0,1,\dots,k\},\ \int_D w^{\otimes l}\, d\mu(w) \;=\; \sum_{j=1}^N c_j \,{w_j}^{\otimes l}$.
\end{theorem}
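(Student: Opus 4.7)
The plan is to reduce the statement to the classical Carathéodory theorem for convex conical hulls in finite dimensions. The key bookkeeping observation is that, although $w^{\otimes l}$ is formally an $m^{l}$-dimensional tensor, it is symmetric in all $l$ slots and therefore uniquely determined by the monomials $w_{1}^{\alpha_{1}}\cdots w_{m}^{\alpha_{m}}$ with $|\alpha|=l$. Summing over $l\in\{0,1,\dots,k\}$ and using the hockey-stick identity gives $\sum_{l=0}^{k}\binom{m+l-1}{l}=\binom{m+k}{k}=N_{m,k}$ independent scalar coordinates. Hence I would package all moments up to order $k$ into a single moment map $\Phi:D\to\mathbb{R}^{N_{m,k}}$ whose components enumerate these monomials. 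The theorem will follow once I can write $M:=\int_{D}\Phi(w)\,d\mu(w)$ as a non-negative linear combination of at most $N_{m,k}$ vectors of the form $\Phi(w_{j})$.

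The first step is to verify that $M$ lies in the convex conical hull $C$ of $\Phi(D)\subset\mathbb{R}^{N_{m,k}}$. For this I would use a duality argument: for every $v$ in the dual cone $C^{\ast}=\{v:\langle v,\Phi(w)\rangle\geq 0\ \text{for all}\ w\in D\}$, one has $\langle v,M\rangle=\int_{D}\langle v,\Phi(w)\rangle\,d\mu(w)\geq 0$, which places $M$ inside the closed conical hull $\overline{C}$. The second step is to invoke Carathéodory's theorem for conical combinations, which states that any point in the conic hull of a subset of $\mathbb{R}^{n}$ admits a representation as a non-negative combination of at most $n$ points of that subset. Applied in our setting, this yields $M=\sum_{j=1}^{N}c_{j}\Phi(w_{j})$ with $c_{j}>0$, $w_{j}\in D$, and $N\leq N_{m,k}$. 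Reading off the identity component-wise (that is, restricting to the coordinates that encode $w^{\otimes l}$) recovers the required quadrature $\int_{D}w^{\otimes l}\,d\mu(w)=\sum_{j}c_{j}\,w_{j}^{\otimes l}$ for every $l\leq k$.

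The main obstacle is upgrading $M\in\overline{C}$ to $M\in C$, because the conical hull of an arbitrary subset of Euclidean space need not be closed, so the separation argument alone is not enough. I would handle this either by imposing mild regularity on $D$ (for instance, compactness, in which case $\Phi(D)$ is compact and $C$ is closed), or, more robustly, via a Bayer--Teichmann-style reduction: first approximate $\mu$ by a discrete positive measure supported on finitely many points whose moment vector exactly equals $M$ (available by a linear-algebra dimension count in the $N_{m,k}$-dimensional moment space), and then apply the elementary finite version of Carathéodory to that discrete measure. This measure-theoretic step is the only substantive piece of the argument beyond routine convex-geometric bookkeeping; everything else is a direct translation between symmetric tensors, monomials, and points in a convex cone.
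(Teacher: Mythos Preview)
Your proposal is correct and follows essentially the same route as the paper, which simply remarks that the result follows from Carath\'eodory's theorem in dimension $N_{m,k}$ (and separately notes that Algorithm~\ref{alg:reduce} gives a constructive version). Your write-up is in fact more detailed than the paper's one-line justification, and your discussion of the closure issue $M\in\overline{C}$ versus $M\in C$ is a genuine subtlety that the paper does not address.
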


A proof follows from Carathéodory's theorem in dimension $N_{m,k}$ \citep{leonard2015geometry}. Note that $N_{m,k}$ is precisely the dimension of the linear space of all moments up to order $k$ (including a fictitious dimension for $p_0$). 
Algorithm~\ref{alg:reduce} in Appendix~\ref{app:algorithm} guarantees such a compression: whenever there are more than $N_{m,k}$ weighted objects, \REV{we} can always reduce the support to at most $N_{m,k}$ objects while preserving the first $k$ moments.


In the spirit of Tchakoloff's theorem, one can compress the original parameter set $\theta$ into a smaller set of  weighted parameter set $\theta'$:

\begin{definition}
    A weighted parameter set $\theta'$ is defined as a collection of weight-parameter pairs
    \begin{equation}
        \theta' = \{ (c_1, w_1), (c_2, w_2), \dots, (c_{d'}, w_{d'}) \} ,
    \end{equation}
    where each $c_j\geq 0$ and $w_j\in V = \mathbb{R}^m$. The moments of $\theta'$ and the values of symmetric functions are defined as an average over the discrete measure $\mu = \sum_{j=1}^{d'} c_j \delta_{w_j}$:
    \begin{equation}
        p'_k = \frac{1}{\sum_{j=1}^{d'} c_j} \sum_{j=1}^{d'} c_j w_j^{\otimes k} , \quad
        f(\theta') = h \left( \sum_{j=1}^{d'} c_j g(w_j) \right)  ,
    \end{equation}
    where $f(\theta) = h (\sum_{i=1}^d g(w_i))$ is the deep-set representation of a symmetric function. 
\end{definition}

We remark that the original $\theta$ can also be viewed as weighted, with unit weight for each object. A feature of our compression is that it does not change any of the $w_i$'s but instead adjusts the weights so that they are supported on a smaller subset. When $f$ and $\{w_i\}_{i\in[d]}$ are fixed, we may regard the output as a function of only the weights $\{c_i\}_{i\in[d]}$. Specifically, given $f(\theta) = h (\sum_{i=1}^d g(w_i))$, we denote $\phi(c) = h \left( \sum_{i=1}^{d} c_i g(w_i) \right)$.

We are now ready to study approximating symmetric functions $f$ that are Taylor-expandable. Expanding $f$ to order $k$ yields a symmetric polynomial of degree $k$, and these moments can be matched with very few weighted objects. Consequently, the approximation error begins at order $(k+1)$.



\begin{theorem}[Moment matching in a small ball] \label{thm:moment}
    Let $f$ be a symmetric function acting on a weighted parameter set $\theta = \{(c_i, w_i)\}_{i\in [d]}$, and let $\phi$ be its corresponding function acting on weights. Let $r = \operatorname{diam}(\{w_i\}_{i\in[d]})$. Then, there exist nonnegative weights $\{c'_i\}_{i\in[d]}$ such that no more than $N_{m,k} \equiv \binom{m+k}{k}$ weights are nonzero, and 
    \begin{equation}
        \left| \phi(c') - \phi(c) \right| = O(dr^{k+1}).
    \end{equation}
\end{theorem}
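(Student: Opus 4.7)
The plan is to combine Theorem~\ref{thm:tchakaloff} (Tchakaloff / Carathéodory moment reduction) with a local Taylor expansion of $g$ on the ball $B$ of radius $r$ containing all $w_i$. First, I would invoke Theorem~\ref{thm:tchakaloff} applied to the discrete measure $\sum_i c_i \delta_{w_i}$: since the moment vector $(p_0, p_1, \dots, p_k)$ lies in a linear space of dimension $N_{m,k}$, Carathéodory's theorem in that space produces nonnegative weights $c'_i$ supported on at most $N_{m,k}$ of the original indices that match all moments of order $\leq k$,
\begin{equation}
\sum_i c'_i\, w_i^{\otimes l} \;=\; \sum_i c_i\, w_i^{\otimes l} \qquad \text{for every } l \in \{0,1,\dots,k\}.
\end{equation}
This is the guarantee delivered by Algorithm~\ref{alg:reduce} in the appendix, and crucially the support of $c'$ is a subset of the original $\{w_i\}$.

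Next, I would fix a reference point $w_0 \in B$ (say $w_0 = w_1$); by definition of $r$, every $w_i$ satisfies $\|w_i - w_0\| \leq r$. Using the Taylor-expandability of $g$ with positive radius of convergence, for sufficiently small $r$ we may write
\begin{equation}
g(w_i) \;=\; \sum_{l=0}^{k} \tfrac{1}{l!}\, D^l g(w_0)\bigl[(w_i - w_0)^{\otimes l}\bigr] + R_i, \qquad \|R_i\| \leq C_g\, r^{k+1}.
\end{equation}
Shifted tensor monomials of degree $\leq k$ are linear combinations of unshifted monomials of degree $\leq k$, so the moment-matching identity above gives $\sum_i (c_i - c'_i)(w_i - w_0)^{\otimes l} = 0$ for each $l \leq k$. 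The Taylor polynomial therefore cancels in the difference, leaving
\begin{equation}
\sum_i c_i\, g(w_i) - \sum_i c'_i\, g(w_i) \;=\; \sum_i (c_i - c'_i)\, R_i,
\end{equation}
whose norm is at most $C_g\, r^{k+1}\bigl(\sum_i c_i + \sum_i c'_i\bigr) = O(d\, r^{k+1})$, using the zeroth-moment identity $\sum_i c'_i = \sum_i c_i \leq d$ (recall that the original weights are unit).

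Finally, I would invoke local Lipschitzness of $\rho$, which follows from its smoothness inside its radius of convergence, to transfer this $O(d r^{k+1})$ bound from the argument to the value of $\rho$, yielding $|\phi(c') - \phi(c)| = O(d r^{k+1})$ as claimed. The main subtlety I anticipate is that the argument of $\rho$ is itself of order $d$ in magnitude, so one must verify that $\rho$'s derivative is uniformly bounded on a neighborhood of $\sum_i c_i g(w_i)$ of radius $O(d r^{k+1})$; for small $r$ this follows from analyticity of $\rho$ inside its convergence region together with the standing assumption that neither $\rho$ nor $g$ depends on $d$. A secondary bookkeeping issue is the interaction between Taylor-expanding $g$ at a single reference point $w_0$ versus uniformly over $B$, but since $B$ has diameter $r$ the remainder bound is uniform. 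Beyond these routine checks, the proof is a direct chain of Tchakaloff plus Taylor-with-remainder.
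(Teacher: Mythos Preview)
Your proposal is correct and follows essentially the same approach as the paper's proof: both use Tchakaloff/Carath\'eodory to reduce the support while matching the first $k$ moments, then Taylor-expand around a common center $w_0$ so that the order-$\le k$ terms cancel and the remainder is $O(dr^{k+1})$. The only organizational difference is that you Taylor-expand $g$ and then invoke Lipschitzness of $\rho$, whereas the paper Taylor-expands the composite $f(\theta)=\rho\bigl(\sum_i g(w_i)\bigr)$ directly and reads off the coefficient $b_{k+1}=d\,\rho'(y_0)\,a_{k+1}$ of $p_{k+1}$ from the deep-set representation; both routes rely on the same implicit $O(1)$ bound on $\rho'$ at the evaluation point, which you correctly flag as the main subtlety.
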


Theorem~\ref{thm:moment} is a main ingredient of our compression theory. A lesson from this theorem is that dealing with a group of objects of small diameter is very effective in suppressing error, as we can choose a large enough $k$ to greatly suppress the compression error. 


\vspace{-2mm}
\subsection{Asymptotic Lossless Compression}
\label{ssec:lossless_cp}
\vspace{-1mm}

Theorem~\ref{thm:moment} shows that compressing points in a small diameter enables a tight control on the error. By a sphere packing argument (see Lemma~\ref{lemma:pack_many_pts}), if we put a lot ($d$) of objects in a finite region, then we can always find a subset of diameter $O((N/d)^{1/m})$ containing $N$ objects. This gives rise to a general strategy of compression in Algorithm~\ref{alg:general}, and any concrete algorithm that fits in this strategy is guaranteed to have vanishing compression error.

\vspace{-2mm}
\begin{algorithm}
\caption{A compression algorithm family.}\label{alg:general}

\SetKwInOut{Input}{Input}
\SetKwInOut{Output}{Output}
\SetKw{function}{function}

\Input{a set of objects $\theta =\{w_i\}_{i\in d}$, target size $d'$, moment-matching order $k \in \mathbb{Z}_+$}
\Output{$\theta' = \{c_j, w_j\}_{j\in \supp(c)}$, where $|\supp(c)| \leq d'$}

Initialize $c_i=1$ for all $i\in [d]$\;

\While{$|\supp(c)| > d'$}{
     \textbf{Step 1 (clustering)}: Find a cluster $S \subseteq \supp(c)$ such that $|S|>N_{m,k}$ and $\operatorname{diam}\{w_j\}_{j\in S} = O(|\supp(c)|)^{-1/m}$ \;
     \textbf{Step 2 (moment matching)}: Update weights $\{c_j\}_{j\in S}$ such that they are supported on $N_{m,k}$ objects, and the first $k$ moments are kept unchanged. 
}
\end{algorithm}
\vspace{-2mm}

In terms of a practical compression algorithm, we first remark that Step 2 (moment matching) can be realized by Algorithm~\ref{alg:reduce} in Appendix~\ref{app:algorithm}, although several other moment-matching algorithms exist, especially in the case of peeling many points in one cluster \citep{piazzon2017caratheodory}. The most time-consuming part in Algorithm~\ref{alg:reduce} is finding null vectors, so we estimate its time complexity as $d N_{m,k}^2$, which tells us that matching higher moments takes a much longer time. For clustering, ideally in each iteration one wants to greedily find the $(N_{m,k}+1)$-subset with the smallest diameter, but the $k$-nearest neighbor problem is known to be NP hard in general. For clarity, in Theorem~\ref{thm:compression_errorbound} of this section, we prove error bounds associated with the greedy strategy, indicating that a compression map satisfying our asserted error bounds universally exists. However, we perform numerical experiments using $k$-means clustering (see Appendix \ref{app:algorithm}), which works sufficiently well and is much faster.

The following theorem shows that with sufficiently large $k$ (compared to $m$, but still much smaller than $d$), one can compress the number of active objects to any power of $d$ with vanishing error.

\begin{theorem}[Universal Compression]\label{thm:compression_errorbound}
    Let $\|w_i\|\leq R$ for all $i\in[d]$. 
    Algorithm~\ref{alg:general} with moment-matching order $k$ can compress $\theta = \{w_i\}_{i=1}^d$ into $d'<d$ weighted objects $\theta'$, such that for any symmetric function $f$ satisfying Assumption~\ref{assump:analytic_deepsets},
    \begin{enumerate}[noitemsep,topsep=0pt, parsep=0pt,partopsep=0pt, leftmargin=20pt]
        \item The error is 
        \begin{equation}    \label{eq:error_finite_k}
            \mathcal{E} = \left| \phi(c') - \phi(c) \right| = O\left( d \, (d')^{1-\frac{k+1}{m}} \right) ,
        \end{equation}
        where $\phi$ stands for the function $f$ treating weights as variables; 
        \item If $k>m-1$, $d$ original objects can be compressed into 
        \begin{equation}    \label{eq:dstop_finite_k}
            d' = \REV{O} \left( \left(\frac{d}{\REV{\varepsilon}(d)}\right)^{\frac{m}{k-m+1}} \right)
        \end{equation}
        weighted objects, such that the error is no larger than $\REV{\varepsilon}(d)$. 
    \end{enumerate}
\end{theorem}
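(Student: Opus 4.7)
The plan is to execute Algorithm~\ref{alg:general} iteratively, reducing the number of active objects one at a time from $d$ down to $d'$, and bound the total compression error as the sum of the per-step errors given by Theorem~\ref{thm:moment}. Write $n_t$ for the number of active objects at the start of iteration $t$, so $n_1 = d$ and the loop terminates once $n_t = d'$, yielding $d - d'$ iterations in total.

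For Step 1 (clustering), I would apply a sphere-packing / pigeonhole argument in the fixed ball of radius $R$ (Lemma~\ref{lemma:pack_many_pts}): partitioning the ambient ball into $\Theta(n_t / N_{m,k})$ subcells of side $\Theta(n_t^{-1/m})$, pigeonhole guarantees a subcell containing at least $N_{m,k}+1$ active points, so a cluster $S_t$ with $r_t \equiv \operatorname{diam}(S_t) = O(n_t^{-1/m})$ always exists. Because Algorithm~\ref{alg:general} never modifies the locations $w_i$, the same ambient ball is valid across all iterations. For Step 2 (moment matching), Theorem~\ref{thm:moment} applied to $S_t$ produces a new weight assignment supported on at most $N_{m,k}$ points of $S_t$ (reducing the active count by at least one) and incurring error $O(W_t \, r_t^{k+1})$, where $W_t = \sum_{i \in S_t} c_i$ is the weight contained in the cluster. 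Since moment matching preserves the zeroth moment $\sum_i c_i$, the global weight sum is conserved at its initial value $d$, and in particular $W_t \leq d$ uniformly across $t$.

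Putting these together and bounding the resulting sum,
\begin{equation*}
    \mathcal{E} \;\leq\; \sum_{t=1}^{d-d'} O\bigl(W_t \, r_t^{k+1}\bigr) \;=\; O\!\left( d \sum_{n=d'+1}^{d} n^{-(k+1)/m} \right).
\end{equation*}
Under the implicit assumption $k > m - 1$, the exponent $(k+1)/m$ exceeds $1$, so the tail sum is dominated by its lower endpoint and equals $O\bigl((d')^{1-(k+1)/m}\bigr)$ by the integral comparison $\int_{d'}^\infty x^{-(k+1)/m}\, dx$. This yields part 1: $\mathcal{E} = O\bigl(d \, (d')^{1-(k+1)/m}\bigr)$. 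Part 2 follows by algebraic inversion: setting the right-hand side equal to $\omega(d)$ and solving gives $(d')^{(k+1)/m - 1} = \Theta(d/\omega(d))$, i.e.\ $d' = \Omega\bigl((d/\omega(d))^{m/(k-m+1)}\bigr)$.

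The main obstacle I anticipate is the bookkeeping for the per-step weight factor $W_t$: the bound $W_t \leq d$ is tight only when nearly all weight has already been concentrated into a single cluster, so the above estimate is deliberately worst-case and would require a more delicate argument (e.g.\ tracking a potential function on the weight distribution) to tighten. A secondary subtlety is ensuring that the $O(n_t^{-1/m})$ diameter bound is truly uniform in $t$; this relies on the ambient ball being fixed and on $n_t$ distinct support points remaining after $t-1$ rounds, which holds because each iteration removes at least one point from $\operatorname{supp}(c)$ and never adds one.
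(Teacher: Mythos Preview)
Your proposal is correct and follows essentially the same approach as the paper's proof: iterate the greedy version of Algorithm~\ref{alg:general}, bound the per-step error by Theorem~\ref{thm:moment} with $W_t \le d$ and $r_t = O(n_t^{-1/m})$ via the sphere-packing Lemma~\ref{lemma:pack_many_pts}, then sum and compare to the integral $\int_{d'}^{d} N^{-(k+1)/m}\,dN$ under the assumption $k+1>m$. Your remarks about the crudeness of the $W_t\le d$ bound and the uniformity of the diameter estimate mirror exactly the paper's own caveats.
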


An implication of this theorem is that we can keep a vanishing portion of objects with vanishing error. In fact, we can reach $d'=d^{\sigma}$ with any $0<\sigma<1$. To see this, we insert $d'=d^{\sigma}$ into Eq.~\eqref{eq:error_finite_k} and get
\begin{equation}    \label{eq:error_dsigma}
    \left| \phi(c') - \phi(c) \right| = O(d^{1+\sigma(1-\frac{k+1}{m})}).
\end{equation}
This error bound is vanishing by choosing $k > (1+\sigma^{-1})m - 1$. In Eq.~\eqref{eq:error_finite_k}, compressing up to order $k$ is enabled by the fact that the function is at least $C^k$-differentiable, whereas the term $m$ in the exponent is a typical signature of the curse of dimensionality. This competing effect of dimensionality and differentiability is reminiscent of common error scalings in nonparametric statistics \citep{wasserman2006all}. 

Theorem~\ref{thm:compression_errorbound} seems to imply that larger $k$ always yields tighter error bounds. However, this stops being true when a cluster of $N_{m,k}$ points becomes comparable to $d$. A more careful analysis shows that the optimal choice of $k$ is $k_{\mathrm{opt}} = \Theta(d'^{1/m})$ (Theorem~\ref{thm:errorbound_polylog}). It follows that there exists a moment matching algorithm such that the error is at most stretched-exponentially decaying as $\exp(-\mathrm{const}\times \sqrt[m]{d})$; correspondingly, we can compress $d$ original objects into
\begin{equation}\label{eq:opt_cp_rate}
    d' =  O \left(\log^m \frac{d}{\REV{\varepsilon}(d)} \right)
\end{equation}
weighted objects, such that the error is no larger than $\REV{\varepsilon}(d)$. \REV{To better understand the $d'$ lower bounds, we consider e.g., a power-law error $\varepsilon(d) = d^{-\alpha}$. Then Eq.~\eqref{eq:dstop_finite_k} becomes $d^{-(1+\alpha)m/(k-m+1)}$, which can always be made smaller than $d$ by choosing large enough $k$. Eq.~\eqref{eq:opt_cp_rate} becomes $((1+\alpha)\log d)^m$, which is still of order $\log^m d$.} Despite we proved that compressing to $\log^m (d)$ is possible, it should be noted that when $k=k_{\mathrm{opt}}$, $N_{m,k}$ becomes comparable to $d$, so compression becomes much more computationally expensive.

\begin{REV*}
\textit{Compression rate lower bound.} In Appendix~\ref{app:optimality}, we prove that there exists a $d$-object distribution that cannot be compressed into less than $\propto \log^m d$ objects without inducing finite error. Hence, the compression rate $d\to \log^m d$ is \emph{optimal} up to a constant factor, and achievable by the moment-matching method we established. 

To briefly summarize the reasoning, in Theorem~\ref{thm:error_lower_bound}, we show that there is a $d$-point uniform distribution such that, for any $d'$-point weighted distribution, we can find a symmetric function such that its values on the two distributions differ by at least $A d \exp[ -B d'^{1/m} ]$, where $A$ and $B$ are constants. The $d$-point distribution is chosen to be ``quasi-uniform'' (see Definition~\ref{def:quasi_uniform}). This property makes it intuitively hard to compress, since if otherwise, a big cluster of objects concentrate, we can reduce them to fewer objects by moment matching. The adversarial symmetric function is constructed as a degree $\sim d'^{1/m}$ polynomial. If we want a compression to be universally lossless, $A d \exp[ -B d'^{1/m} ]$ must be vanishing as $d\to\infty$. Solving this inequality gives the desired lower bound for $d'$, which is $\Omega (\log^m d)$. 
\end{REV*}

\begin{figure}[t!]
\vspace{-1em}
    \centering
    \includegraphics[width=\linewidth]{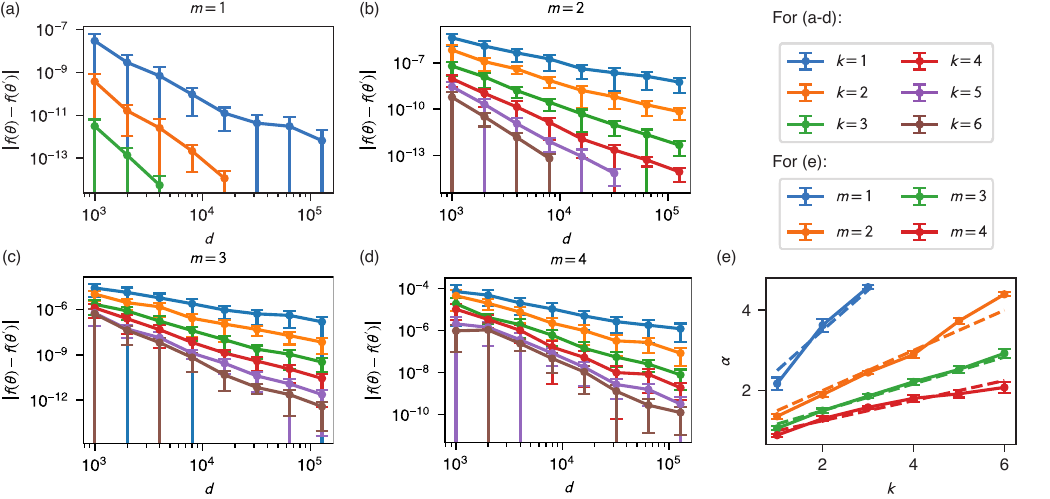}
    \caption{\small Error scaling for compressing a general symmetric function (Eq.~\eqref{eq:sigmoid_func}) using the moment-matching method. 
    (a--d): each point shows the error in $f$ after compressing $d \to \max([0.1d], N_{m,k})$ input objects. Matching higher-order moments leads to faster error decay. 
    (e): $\alpha$ is the fitted exponent in $|f(\theta) - f(\theta')| \propto d^{-\alpha}$. 
    The dashed lines indicate $(k+1)/m+0.5$, which show good agreement with the numerical results.}
    \label{fig:error_scaling}
    \vspace{-2mm}
\end{figure}

\paragraph{Numerical simulation.}  
Figure~\ref{fig:error_scaling} presents a direct numerical test of the compression error scaling predicted in Theorem~\ref{thm:compression_errorbound}. We study the following function:
\begin{equation}    \label{eq:sigmoid_func}
    f(w_1, \dots, w_d) = \frac{1}{d} \sum_{i=1}^{d} \frac{1}{10}\sum_{a=1}^{10} \operatorname{sigmoid}(\langle w_i, x_a \rangle),
\end{equation}
where all $w_i$ and $x_a$ are $m$-dimensional vectors, and $\langle \cdot, \cdot \rangle$ denotes the inner product. We compress a fixed fraction of the original dataset across different dimensions and with varying moment-matching orders. Specifically, we randomly sample vectors $x_a$ and inputs $\{w_i\}_{i\in [d]}$, perform compression, and average results over $20$ trials to obtain each data point in Fig.~\ref{fig:error_scaling} (a--d); error bars indicate standard deviation. Figure~\ref{fig:error_scaling}(e) shows that the error decays approximately as $\mathcal{E} \sim d^{-(k+1)/m-0.5}$. While Eq.~\eqref{eq:error_dsigma} predicts an upper bound $\mathcal{E} \sim d^{-(k+1)/m+2}$, the observed error lies well below this bound and shows a dependency on $k$ and $m$ that is similar to the theoretical bound. 

\begin{figure}[t!]
\vspace{-0.5em}
    \centering
    \includegraphics[width=\linewidth]{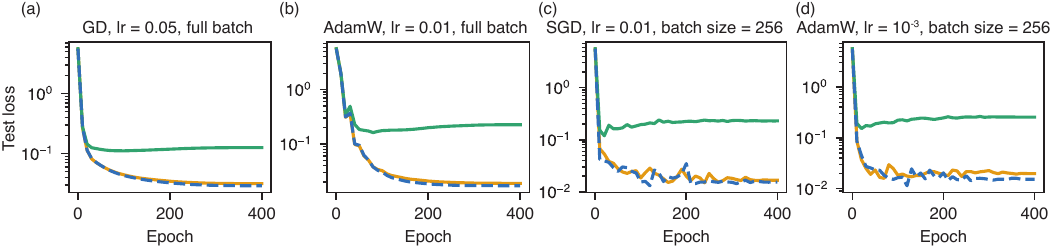}
    \caption{\small 
    Compression of the training dataset in a teacher--student setup. 
    Green dashed line: training with the original dataset of size $d=10^4$; 
    Orange line: training with a compressed dataset of size $10^3$, using order $5$ moment matching. 
    Blue line: training with a size-$10^3$ subset of the original dataset. 
    Each run uses a cosine annealing learning-rate scheduler, annealing from the value shown in the plot titles to $0$.  
    Test MSE loss values are plotted every $10$ epochs. It is observed that learning with the compressed dataset closely approximates the original dataset, whereas learning with a naively subsampled dataset does not.}
    \vspace{-2mm}
    \label{fig:compress_trainds}
\end{figure}

Figure~\ref{fig:compress_trainds} shows the effectiveness of dataset compression in a standard neural network training task. We consider a supervised learning problem in a teacher--student setup. Specifically, the teacher $f(x_1, x_2)$ is implemented as a random two-layer MLP of width $50$ with ReLU activation. The student receives $d$ data points of the form $(x_1, x_2, \mathcal{N}(f(x_1, x_2), 3^2))$, where $x_1$ and $x_2$ are uniformly sampled from $[-1,1]$, and the goal is to learn $f(x_1,x_2)$ using an identical network architecture. The loss function is exactly symmetric in the dataset. For full-batch update rules, which depend on the gradient of the full-batch loss, any model prediction or performance metric is thus a symmetric function of the dataset. Consequently, if the student is given a compressed dataset of size $d'$ derived from an original dataset of size $d$, the performance approximates that of training on the full dataset, and typically surpasses that of training on $d'$ naively subsampled data points (Fig.~\ref{fig:compress_trainds}(a,b)). 

In practice, however, stochastic update rules based on mini-batches are more common. In this case, permutation symmetry holds only in an averaged sense. Nonetheless, we find that compression remains useful. Figures~\ref{fig:compress_trainds}(c,d) show the loss evolution with batch size $256$, supporting this claim. 
Additional details are provided in Appendix~\ref{app:numerical}.

\vspace{-2mm}
\section{Dynamical lottery ticket hypothesis}\label{sec:LTH}
\vspace{-1mm}

The lottery ticket hypothesis \citep{frankle2018lottery} postulates that within any sufficiently wide neural network, one can find a subnetwork that, when trained in isolation, achieves performance comparable to the original. While widely observed, its theoretical understanding has remained elusive. This section proves a corollary of our compression theory: any layer of a neural network with width $d$ can be asymptotically compressed, losslessly, to a size polylogarithmic in $d$ such that the training dynamics before and after compression are identical. We refer to this statement as the \emph{dynamical LTH}, which can be viewed as a stronger and more quantitative variant of the original hypothesis, which only requires the final result to be identical.

The key idea behind the dynamical LTH is that predictions and loss functions are not only symmetric functions of the trained parameters, but can also be regarded as symmetric functions of the initial parameters. This holds because common update rules are \emph{equivariant}, i.e., they commute with permutations (see Appendix~\ref{app:DLTH}). Let $f$ denote a symmetric function, let $\mathcal{T}$ denote the training dynamics (a mapping from initial parameters to trained parameters), and let $\theta_0$ denote the initial network parameters. By equivariance, $f(\mathcal{T}(\theta_0))$ is again a symmetric function of $\theta_0$. Therefore, assuming that $f \circ \mathcal{T}$ admits a finite radius of convergence, the moment-matching compression established earlier applies directly to $f \circ \mathcal{T}$, leading to the dynamical LTH.

To make the dynamical LTH practically applicable, we must define a compressed dynamics $\mathcal{T}'$, which maps compressed initial parameters to compressed trained parameters. The formal definition is given in Appendix~\ref{app:DLTH}, but in practice it is straightforward. For gradient-based update rules such as SGD or Adam, the compressed dynamics acting on $\theta' = \{(c_j, w_j)\}_{j \in [d']}$ is identical to the original dynamics on $\theta$, except that each gradient $\partial L / \partial w_j$ is replaced by ${c_j}^{-1} \partial L / \partial w_j$. With the notions of equivariance and compressed dynamics, one can prove the dynamical LTH:

\begin{theorem}[Dynamical lottery ticket hypothesis]\label{thm:DLTH}
    Let $\theta = \{w_i\}_{i\in[d]}$ be a set of permutation-symmetric trainable parameters of a neural network, and each $\|w_i\| \leq R$. Suppose the model prediction $f: V^d\to \mathbb{R}$ is permutation invariant, and the training dynamics $\mathcal{T}: V^d\to V^d$ is equivariant. Also, suppose $f\circ\mathcal{T}$ satisfies Assumption~\ref{assump:analytic_deepsets}. Then, for any initial parameter $\theta_0$, there exists a compressed weighted parameter $\theta'_0$ \REV{(which does not depend on $f$ or $\mathcal{T}$)}, supported on $d' = O (\log^m \tfrac{d}{\REV{\varepsilon}(d)})$ points, such that
    \begin{equation}
        \big| f(\mathcal{T}'(\theta'_0)) - f(\mathcal{T}(\theta_0)) \big| \;=\; \REV{\varepsilon}(d) .
    \end{equation}
\end{theorem}



More specific error scaling of the dynamical LTH takes the same form as in Theorems~\ref{thm:compression_errorbound} and \ref{thm:errorbound_polylog}. Here, Assumption~\ref{assump:analytic_deepsets} is needed for the $\propto d$ scaling in Theorem~\ref{thm:moment} to hold. Theorem~\ref{thm:DLTH} not only applies to two-layer MLPs, but also naturally extends to e.g., multi-head attention. However, it requires some scrutiny to be extended to deep networks. From a similar reasoning we expect that each layer can be compressed to a vanishing portion of its original width while respecting the dynamical LTH, but the scaling of $d'$ is an open question. Regarding Assumption~\ref{assump:analytic_deepsets}, it is theoretically not clear whether $f\circ \mathcal{T}$ respects this assumption even if $f$ does. We expect that when the single-step mappings are analytic and the number of steps is finite (compared to $d$), the analyticity of $f\circ \mathcal{T}$ follows $f$ and thus Theorem~\ref{thm:DLTH} holds.

A subtlety of Theorem~\ref{thm:DLTH} is that compression produces a ``weighted neural network.'' However, such networks can be reduced to equivalent standard networks. To see this, recall the expression for the output in Eq.~\eqref{eq:NN_output}. For a weighted network, this becomes
\begin{equation}
    f(x) = \sum_{j=1}^{d'} c_j v_j \,\sigma(w_j^T x).
\end{equation}
If we redefine the second-layer weights as $v_j \leftarrow c_j v_j$, then the expression reduces exactly to the output of a standard neural network (by incorporating $c$ into the outgoing weight).

\begin{figure}[t!]
\vspace{-1em}
    \centering
    \includegraphics[width=\linewidth]{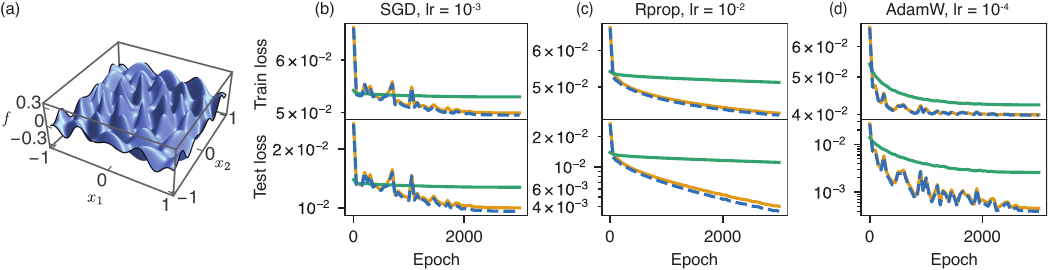}
    \caption{\small 
    Dynamical LTH (Theorem~\ref{thm:DLTH}). The demonstrated task is learning a bivariate function from noisy training data. 
    (a) Ground-truth function $f(x_1, x_2) = J_6(20r)\cos(6\theta)$, where $r^2={x_1^2+x_2^2}$ and $\theta=\arctan(x_2/x_1)$, known as a cylindrical harmonic. 
    (b--d) MSE loss vs epoch under three different update rules. 
    Green dashed line: randomly initialized network of width $10^4$; 
    Orange line: compressed network of width $10^3$, using $k=5$ moment matching; 
    Blue line: random subnetwork of the $10^4$-width network, also of width $10^3$. 
    Loss values are plotted every $50$ epochs. 
    All runs employ a cosine annealing learning-rate scheduler. 
    Batch size is $512$ for all cases, and for the three curves in each figure, we enforce identical trajectories of mini-batch choices.}
    \label{fig:LTH}
    \vspace{-2mm}
\end{figure}

\vspace{-2mm}
\paragraph{Numerical simulation.}  
Figure~\ref{fig:LTH} directly validates the dynamical LTH. The task is to learn the function $f(x_1,x_2)$ shown in Fig.~\ref{fig:LTH}(a) from $10^5$ data points of the form $(x_1, x_2, \mathcal{N}(f(x_1, x_2), 0.2^2))$, with $x_{1/2}$ sampled uniformly from $[-1,1]$. Across a wide range of update rules and learning rates, the predictions of a wide network and its compressed counterpart are shown to be nearly indistinguishable throughout the training dynamics.

\vspace{-2mm}
\section{Improving Neural Scaling Laws}\label{sec:NSL}
\vspace{-1mm}

Theorem~\ref{thm:compression_errorbound} shows that the predictions obtained from a large number of objects can be faithfully reproduced by a much smaller set of weighted objects. This observation can be leveraged to asymptotically improve a neural scaling law. A commonly used empirical form of neural scaling laws is \citep{henighan2020scalinglawsauto}
\begin{equation}
    L(d) = L_0 + (d_0/d)^{\alpha},
\end{equation}
where $L$ denotes the loss and $d$ may represent dataset size or number of parameters, or similar resources. Theorem~\ref{thm:compression_errorbound} improves power-law scaling in dataset size or model parameters to at most stretched-exponential scaling. To see this, suppose we compress $d$ objects into $d' = d^\sigma$ objects, where $0 < \sigma < 1$. By Eq.~\eqref{eq:error_dsigma}, the loss takes the form $
L(d) = L_0 + (d_0/d)^{\alpha} + O(d^{1+\sigma(1-\tfrac{k+1}{m})})$. Choosing $k$ sufficiently large ensures that the compression-induced error vanishes compared to $O(d^{-\alpha})$, leading to a improved scaling:
\begin{equation}
    L(d') = L_0 + (d_0'/d')^{\alpha/\sigma}.
\end{equation}
In the same spirit, we can compress $d$ to $d'=O(\log^m d)$ (inducing arbitrarily fast power law error, which is negligible) so that any power-law scaling is improved to a stretched exponential scaling as
\begin{equation}
    L(d') = L_0 + d^{-\alpha} = L_0 + \exp(-\alpha' \sqrt[m]{d'})   .
\end{equation}



\vspace{-2mm}
\paragraph{Numerical simulation.}  
We demonstrate improvements in scaling laws with respect to dataset size in Fig.~\ref{fig:NSL}(a) and with respect to network width in Fig.~\ref{fig:NSL}(b). The learning tasks in Figs.~\ref{fig:NSL}(a) and \ref{fig:NSL}(b) are the same as those in Figs.~\ref{fig:compress_trainds} and \ref{fig:LTH}, respectively; additional details are provided in Appendix~\ref{app:numerical}. In both cases, compressing $d$ objects to $[16\sqrt{d}]$ objects effectively doubles the scaling exponent. As a practical remark, the error bound in Eq.~\eqref{eq:error_dsigma} guarantees sufficiently small compression error if only $k \gtrsim 13$. However, in Fig.~\ref{fig:NSL} we match only up to the $6$th moment and still observe a quadratic speedup. Together with Fig.~\ref{fig:error_scaling}, this suggests that in practice the error scaling can be substantially faster than the worst-case upper bound.

\begin{figure}[t!]
    \centering
    \includegraphics[width=0.75\linewidth]{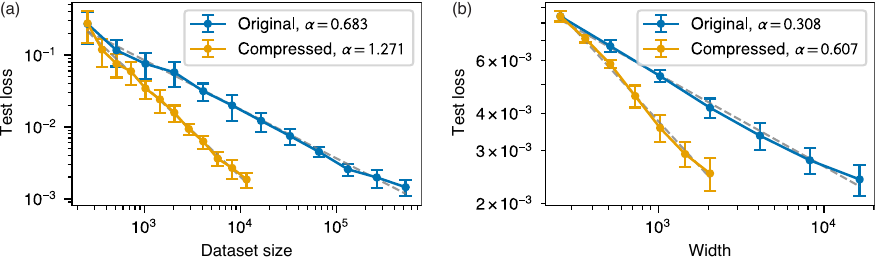}
    \caption{\small 
    Improving neural scaling laws through compression. 
    (a) MSE loss of the teacher--student task after training on an original dataset of size $d$ vs a compressed dataset of size $d'$. 
    (b) MSE loss of the cylindrical harmonic task after training a two-layer neural network of width $d$ versus its compressed counterpart of width $d'$. 
    In both panels, we compress $d$ objects to $d' = [16\sqrt{d}]$ using $k=6$ moment matching. 
    The exponent $\alpha$ is obtained by fitting $L \propto d^{-\alpha}$ or $d'^{-\alpha}$. }
    \label{fig:NSL}
\end{figure}

\vspace{-2mm}
\section{Discussion and Outlook}
\vspace{-1mm}

In this work, we established a rigorous framework showing that any permutation-symmetric function admits strong asymptotic compression. A key advantage of the theory is that it is system-agnostic, meaning that it can be applied to any architecture or loss function. The theory has strong implications for deep learning and AI research. We showed that this result can be applied to understand the compressibility of both neural networks and datasets, a connection that has not been identified previously. This leads to a unified theory of compression for deep learning. 

The central contribution of our theory is a proof of concept that it is theoretically possible to strongly compress neural networks and datasets, enabling far more efficient use of data and parameters. An important future direction is therefore to develop practical compression algorithms that can improve neural scaling laws at scale. We proposed a polynomial-time algorithm that achieves these scalings exactly, but it is currently too slow and memory-intensive in high dimensions. We expect future work to either optimize this algorithm or design scalable approximations. A potential limitation of moment-matching compression is slow error decay when the constituent dimension $m$ is large, reflecting the familiar curse of dimensionality. \REV{Another possible limitation is that when $m$ is large the outputs are likely to degrade in smoothness.} However, many ostensibly high-dimensional objects actually lie near low-dimensional manifolds \citep{abbas2021effective}. In particular, language data appear to have an effective dimension $\sim 10$ \citep{gromov2023languagedimensions,du2025correlation}. These observations suggest that our compression framework can be extended to handle high-dimensional yet structured data, where exploiting low-dimensional embeddings may greatly mitigate the apparent limitation. 

Our framework also suggests new directions beyond direct compression. In particular, it points to improved data sampling strategies and model initialization schemes. For example, our proof of the dynamical LTH can be interpreted as showing that a sufficiently well-initialized network may match the performance of a randomly initialized network orders of magnitude larger. The future direction is to construct initializations (or datasets) that behave as if they were already compressed. Intuitively, well-chosen objects should be weighted and roughly equidistant, making further compression difficult. Such strategies may be connected to importance sampling \citep{hammersley2013monte, Bengio2008adaptive} and orthogonal initialization \citep{saxe2014exactsolutions}. On the theory side, many questions remain open. Extensions of our results could refine approximation rates in Barron space \citep{barron1994approximation, ma2022barron, yang2025optimal}. Finally, while our framework is rooted in the symmetric group, generalizations to other group structures may offer further insights.

\section*{Acknowledgment}
The authors thank Yizhou Xu for discussion. ILC acknowledges support in part from the Institute for Artificial Intelligence and Fundamental Interactions (IAIFI) through NSF Grant No. PHY-2019786. This work was also supported by the Center for Brains, Minds and Machines (CBMM), funded by NSF STC award  CCF-1231216.

\section*{Reproducibility statement}

A realization of our compression algorithm and code for generating all the figures are available at \url{https://github.com/WHY-David/moment_compression.git}.


\bibliographystyle{iclr2026_conference}
\bibliography{ref}

@misc{abbas2021effective,
  title         = {Effective dimension of machine learning models},
  author        = {Amira Abbas and David Sutter and Alessio Figalli and Stefan Woerner},
  year          = {2021},
  eprint        = {2112.04807},
  archiveprefix = {arXiv},
  primaryclass  = {cs.LG},
  url           = {https://arxiv.org/abs/2112.04807}
}

@article{barron1994approximation,
  title     = {Approximation and estimation bounds for artificial neural networks},
  author    = {Barron, Andrew R},
  journal   = {Machine learning},
  volume    = {14},
  number    = {1},
  pages     = {115--133},
  year      = {1994},
  publisher = {Springer}
}

@article{Bengio2008adaptive,
  author   = {Bengio, Yoshua and Senecal, Jean-SÉbastien},
  journal  = {IEEE Transactions on Neural Networks},
  title    = {Adaptive Importance Sampling to Accelerate Training of a Neural Probabilistic Language Model},
  year     = {2008},
  volume   = {19},
  number   = {4},
  pages    = {713-722},
  keywords = {Monte Carlo methods;Acceleration;Neural networks;Probability;Vocabulary;Frequency estimation;Natural languages;Context modeling;Distributed computing;Feedforward neural networks;Energy-based models;fast training;importance sampling;language modeling;Monte Carlo methods;probabilistic neural networks},
  doi      = {10.1109/TNN.2007.912312}
}

@article{bloem2020probabilistic,
  title   = {Probabilistic symmetries and invariant neural networks},
  author  = {Bloem-Reddy, Benjamin and Teh, Yee Whye},
  journal = {Journal of Machine Learning Research},
  volume  = {21},
  number  = {90},
  pages   = {1--61},
  year    = {2020}
}

@article{blum2017fundamental,
  title     = {The fundamental theorem on symmetric polynomials: history's first whiff of Galois theory},
  author    = {Blum-Smith, Ben and Coskey, Samuel},
  journal   = {The College Mathematics Journal},
  volume    = {48},
  number    = {1},
  pages     = {18--29},
  year      = {2017},
  publisher = {Taylor \& Francis}
}

@article{brea2019weight,
  title   = {Weight-space symmetry in deep networks gives rise to permutation saddles, connected by equal-loss valleys across the loss landscape},
  author  = {Brea, Johanni and Simsek, Berfin and Illing, Bernd and Gerstner, Wulfram},
  journal = {arXiv preprint arXiv:1907.02911},
  year    = {2019}
}

@article{BRUDNYI_2015,
  title     = {Norming sets and related Remez-type inequalities},
  volume    = {100},
  issn      = {1446-8107},
  url       = {http://dx.doi.org/10.1017/S1446788715000488},
  doi       = {10.1017/s1446788715000488},
  number    = {2},
  journal   = {Journal of the Australian Mathematical Society},
  publisher = {Cambridge University Press (CUP)},
  author    = {Brudnyi, A. and Yomdin, Y.},
  year      = {2015},
  month     = nov,
  pages     = {163–181}
}

@book{CoverInformationTheory,
  author    = {Cover, Thomas M. and Thomas, Joy A.},
  title     = {Elements of Information Theory (Wiley Series in Telecommunications and Signal Processing)},
  year      = {2006},
  isbn      = {0471241954},
  publisher = {Wiley-Interscience},
  address   = {New York, NY, USA}
}

@inproceedings{da2022proving,
  title     = {Proving the lottery ticket hypothesis for convolutional neural networks},
  author    = {da Cunha, Arthur and Natale, Emanuele and Viennot, Laurent},
  booktitle = {International Conference on Learning Representations},
  year      = {2022}
}

@misc{douze2024faiss,
  title         = {The Faiss library},
  author        = {Matthijs Douze and Alexandr Guzhva and Chengqi Deng and Jeff Johnson and Gergely Szilvasy and Pierre-Emmanuel Mazaré and Maria Lomeli and Lucas Hosseini and Hervé Jégou},
  year          = {2024},
  eprint        = {2401.08281},
  archiveprefix = {arXiv},
  primaryclass  = {cs.LG}
}

@misc{du2025correlation,
  title         = {Correlation Dimension of Auto-Regressive Large Language Models},
  author        = {Xin Du and Kumiko Tanaka-Ishii},
  year          = {2025},
  eprint        = {2510.21258},
  archiveprefix = {arXiv},
  primaryclass  = {cs.CL},
  url           = {https://arxiv.org/abs/2510.21258}
}

@article{field1980equivariant,
  title   = {Equivariant dynamical systems},
  author  = {Field, Michael J.},
  journal = {Transactions of the American Mathematical Society},
  volume  = {259},
  number  = {1},
  pages   = {185--205},
  year    = {1980}
}

@article{frankle2018lottery,
  title   = {The lottery ticket hypothesis: Finding sparse, trainable neural networks},
  author  = {Frankle, Jonathan and Carbin, Michael},
  journal = {arXiv preprint arXiv:1803.03635},
  year    = {2018}
}

@misc{gromov2023languagedimensions,
  title         = {A Language and Its Dimensions: Intrinsic Dimensions of Language Fractal Structures},
  author        = {Vasilii A. Gromov and Nikita S. Borodin and Asel S. Yerbolova},
  year          = {2023},
  eprint        = {2311.10217},
  archiveprefix = {arXiv},
  primaryclass  = {cs.CL},
  url           = {https://arxiv.org/abs/2311.10217}
}

@book{hammersley2013monte,
  title     = {Monte Carlo Methods},
  author    = {Hammersley, J.},
  isbn      = {9789400958197},
  series    = {Monographs on Statistics and Applied Probability},
  url       = {https://books.google.com/books?id=3rDvCAAAQBAJ},
  year      = {2013},
  publisher = {Springer Netherlands}
}

@article{han2015deep,
  title   = {Deep compression: Compressing deep neural networks with pruning, trained quantization and huffman coding},
  author  = {Han, Song and Mao, Huizi and Dally, William J},
  journal = {arXiv preprint arXiv:1510.00149},
  year    = {2015}
}

@misc{henighan2020scalinglawsauto,
  title         = {Scaling Laws for Autoregressive Generative Modeling},
  author        = {Tom Henighan and Jared Kaplan and Mor Katz and Mark Chen and Christopher Hesse and Jacob Jackson and Heewoo Jun and Tom B. Brown and Prafulla Dhariwal and Scott Gray and Chris Hallacy and Benjamin Mann and Alec Radford and Aditya Ramesh and Nick Ryder and Daniel M. Ziegler and John Schulman and Dario Amodei and Sam McCandlish},
  year          = {2020},
  eprint        = {2010.14701},
  archiveprefix = {arXiv},
  primaryclass  = {cs.LG},
  url           = {https://arxiv.org/abs/2010.14701}
}

@article{kaplan2020scaling,
  title   = {Scaling laws for neural language models},
  author  = {Kaplan, Jared and McCandlish, Sam and Henighan, Tom and Brown, Tom B and Chess, Benjamin and Child, Rewon and Gray, Scott and Radford, Alec and Wu, Jeffrey and Amodei, Dario},
  journal = {arXiv preprint arXiv:2001.08361},
  year    = {2020}
}

@book{leonard2015geometry,
  title     = {Geometry of Convex Sets},
  author    = {Leonard, I.E. and Lewis, J.E.},
  isbn      = {9781119022695},
  lccn      = {2015022891},
  url       = {https://books.google.com/books?id=8hfICgAAQBAJ},
  year      = {2015},
  publisher = {Wiley}
}

@article{ma2022barron,
  title     = {The Barron space and the flow-induced function spaces for neural network models},
  author    = {Ma, Chao and Wu, Lei and others},
  journal   = {Constructive Approximation},
  volume    = {55},
  number    = {1},
  pages     = {369--406},
  year      = {2022},
  publisher = {Springer}
}

@inproceedings{malach2020proving,
  title        = {Proving the lottery ticket hypothesis: Pruning is all you need},
  author       = {Malach, Eran and Yehudai, Gilad and Shalev-Schwartz, Shai and Shamir, Ohad},
  booktitle    = {International Conference on Machine Learning},
  pages        = {6682--6691},
  year         = {2020},
  organization = {PMLR}
}

@misc{pensia2021optimallotteryticketssubsetsum,
  title         = {Optimal Lottery Tickets via SubsetSum: Logarithmic Over-Parameterization is Sufficient},
  author        = {Ankit Pensia and Shashank Rajput and Alliot Nagle and Harit Vishwakarma and Dimitris Papailiopoulos},
  year          = {2021},
  eprint        = {2006.07990},
  archiveprefix = {arXiv},
  primaryclass  = {cs.LG},
  url           = {https://arxiv.org/abs/2006.07990}
}

@inproceedings{piazzon2017caratheodory,
  title     = {Caratheodory-Tchakaloff least squares},
  author    = {Piazzon, Federico and Sommariva, Alvise and Vianello, Marco and others},
  booktitle = {International Conference on Sampling Theory and Applications (SampTA)},
  pages     = {672--676},
  year      = {2017}
}

@incollection{salomon2002data,
  title     = {Data compression},
  author    = {Salomon, David},
  booktitle = {Handbook of massive data sets},
  pages     = {245--309},
  year      = {2002},
  publisher = {Springer}
}

@misc{saxe2014exactsolutions,
  title         = {Exact solutions to the nonlinear dynamics of learning in deep linear neural networks},
  author        = {Andrew M. Saxe and James L. McClelland and Surya Ganguli},
  year          = {2014},
  eprint        = {1312.6120},
  archiveprefix = {arXiv},
  primaryclass  = {cs.NE},
  url           = {https://arxiv.org/abs/1312.6120}
}

@article{sorscher2022beyond,
  title   = {Beyond neural scaling laws: beating power law scaling via data pruning},
  author  = {Sorscher, Ben and Geirhos, Robert and Shekhar, Shashank and Ganguli, Surya and Morcos, Ari},
  journal = {Advances in Neural Information Processing Systems},
  volume  = {35},
  pages   = {19523--19536},
  year    = {2022}
}

@misc{tabaghi2023universalrep,
  title         = {Universal Representation of Permutation-Invariant Functions on Vectors and Tensors},
  author        = {Puoya Tabaghi and Yusu Wang},
  year          = {2023},
  eprint        = {2310.13829},
  archiveprefix = {arXiv},
  primaryclass  = {cs.LG},
  url           = {https://arxiv.org/abs/2310.13829}
}

@article{tchakaloff1957formules,
  author   = {Tchakaloff, V.},
  title    = {Formules de cubatures mécaniques à coefficients non négatifs},
  journal  = {Bulletin des Sciences Mathématiques},
  volume   = {81},
  pages    = {123--134},
  year     = {1957},
  language = {French}
}

@article{wang2018dataset,
  title   = {Dataset distillation},
  author  = {Wang, Tongzhou and Zhu, Jun-Yan and Torralba, Antonio and Efros, Alexei A},
  journal = {arXiv preprint arXiv:1811.10959},
  year    = {2018}
}

@book{wasserman2006all,
  title     = {All of nonparametric statistics},
  author    = {Wasserman, Larry},
  year      = {2006},
  publisher = {Springer}
}

@article{yang2025optimal,
  title     = {Optimal rates of approximation by shallow relu k neural networks and applications to nonparametric regression},
  author    = {Yang, Yunfei and Zhou, Ding-Xuan},
  journal   = {Constructive Approximation},
  volume    = {62},
  number    = {2},
  pages     = {329--360},
  year      = {2025},
  publisher = {Springer}
}

@misc{zaheer2018deepsets,
  title         = {Deep Sets},
  author        = {Manzil Zaheer and Satwik Kottur and Siamak Ravanbakhsh and Barnabas Poczos and Ruslan Salakhutdinov and Alexander Smola},
  year          = {2018},
  eprint        = {1703.06114},
  archiveprefix = {arXiv},
  primaryclass  = {cs.LG},
  url           = {https://arxiv.org/abs/1703.06114}
}

@inproceedings{ziyin2024symmetry,
  title     = {Symmetry Induces Structure and Constraint of Learning},
  author    = {Ziyin, Liu},
  booktitle = {Forty-first International Conference on Machine Learning},
  year      = {2024}
}

@article{ziyin2025parameter,
  title   = {Parameter Symmetry Breaking and Restoration Determines the Hierarchical Learning in AI Systems},
  author  = {Ziyin, Liu and Xu, Yizhou and Poggio, Tomaso and Chuang, Isaac},
  journal = {arXiv preprint arXiv:2502.05300},
  year    = {2025}
}

\appendix

\clearpage

\section{Lemmas and Proofs of Theorems}

\subsection{FTSP}
The following theorem is well known. We quote it and will use it in the proof of Theorem~\ref{thm:fundamental_multisym}. 
\begin{theorem}[Fundamental theorem of symmetric polynomials]   \label{thm:fund_sym}
    For any symmetric polynomial $f(x_1, \dots, x_d)$, where $x_i\in\mathbb{R}$ for all $i\in[d]$, there is a unique polynomial $P$ such that
    \begin{equation}
        f(x_1, \dots, x_d) = P(p_1, \dots, p_d) ,
    \end{equation}
    where $p_k$ ($k\in [d]$) is called the $k$th moment and is defined as 
    \begin{equation}
        p_k = \frac{1}{d}\sum_{i=1}^d x_i^k  .
    \end{equation}
\end{theorem}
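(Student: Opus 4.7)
The plan is to establish existence and uniqueness of the polynomial $P$ in two stages, using elementary symmetric polynomials as an intermediate step and then translating to power sums via Newton's identities.

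For existence, I would first prove the more standard variant of the FTSP: every symmetric polynomial is a polynomial in the elementary symmetric polynomials $e_k=\sum_{i_1<\cdots<i_k} x_{i_1}\cdots x_{i_k}$. This is a textbook induction on the lex-leading monomial. Given $f$ with leading monomial $x_1^{a_1}\cdots x_d^{a_d}$, symmetry forces $a_1\geq\cdots\geq a_d$, and the product $e_1^{a_1-a_2}e_2^{a_2-a_3}\cdots e_d^{a_d}$ has the same leading monomial with coefficient one; subtracting an appropriate scalar multiple strictly reduces the leading monomial in lex order, and since the lex order on monomials of bounded total degree is a well-ordering, the process terminates in finitely many steps. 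Next, I would use Newton's identities $k\, e_k=\sum_{j=1}^k(-1)^{j-1} e_{k-j}\, d\, p_j$ (the extra factor of $d$ compensates for the $1/d$ normalization in the theorem's definition of $p_k$), derivable from the generating identity $\prod_i(1-t x_i)=\sum_k(-t)^k e_k$. Solving recursively expresses each $e_k$ as a polynomial in $p_1,\dots,p_k$ with rational coefficients, and substituting into the $e$-representation yields the desired $f=P(p_1,\dots,p_d)$.

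For uniqueness, I would show that $p_1,\dots,p_d$ are algebraically independent over $\mathbb{R}$, so that distinct polynomials $P$ necessarily yield distinct symmetric polynomials. Algebraic independence follows from the nonvanishing of the Jacobian $\det[\partial p_k/\partial x_i]$, which, after pulling $k/d$ out of row $k$, reduces to the Vandermonde determinant $\prod_{i<j}(x_j-x_i)$, not identically zero on $\mathbb{R}^d$. Consequently the map $(x_1,\dots,x_d)\mapsto(p_1,\dots,p_d)$ has generic rank $d$, which rules out any nontrivial polynomial relation.

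The main obstacle is minor but worth flagging: Newton's identities require dividing by the integer $k$, so the argument presumes characteristic zero. This is harmless here since the paper works over $\mathbb{R}$. A fully direct alternative would perform the lex-leading-monomial induction using power sums themselves rather than elementary symmetric polynomials, but the leading monomial of $p_{k_1}^{a_1}\cdots p_{k_r}^{a_r}$ comes accompanied by many lower-lex contaminants, making the cancellation bookkeeping less clean; routing through $e_k$ is the standard transparent path.
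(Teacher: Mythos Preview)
Your proposal is a correct and standard proof of the classical FTSP: existence via the lex-leading-monomial induction on elementary symmetric polynomials followed by Newton's identities to pass to power sums, and uniqueness via algebraic independence established through the Vandermonde Jacobian. The characteristic-zero caveat you flag is appropriate and harmless in the paper's setting over $\mathbb{R}$.

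There is nothing to compare against, however: the paper does not prove this theorem. It introduces Theorem~\ref{thm:fund_sym} with the sentence ``The following theorem is well known. We quote it and will use it in the proof of Theorem~\ref{thm:fundamental_multisym},'' and then invokes it as a black box in the proofs of the multivariate variant and of Lemma~\ref{lemma:ev_fund_rep}. So your write-up would in fact supply a proof where the paper offers none; the approach you outline is the standard textbook route and entirely adequate for that purpose.
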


An arguably more common statement of the FTSP is that any symmetric function is a function of power sums $\sum_{i=1}^{d} x_i^k$ for $k=0,1,\dots, d$, which differs from our statement by a normalization of $d$. However, our convention fixes each moment $p_k$ to constant order of magnitude. With our convention, we can treat $P$ as dependent on $d$.

\subsection{Theorem~\ref{thm:fundamental_multisym}}
\begin{proof}
    We write the coordinates of $w_i$ as $(w_{i,1}, \dots, w_{i,m})$. For convenience, we use the multi-index notation $\alpha = (a_1, \dots, a_m)$, such that $w_i^\alpha \equiv w_{i,1}^{a_1} \dots w_{i,m}^{a_m}$. For a multi-index $\alpha$, $|\alpha| \equiv a_1 + \dots + a_m$. Also, we define $q_\alpha = \sum_{i=1}^d w_i^\alpha$ (we use $q$ to distinguish from $p$ with a $1/d$ factor). Our proof is divided into a few steps: (1) Represent $f$ in terms of $q_\alpha$'s (2) Repack $\{q_\alpha\}$ into tensor moments $\{p_k\}$ (3) Show that only $p_k$ for $k=1,\dots, d$ are independent (4) Show that the representation in terms of $\{p_k\}$ is unique. 

    Generally, a polynomial $f$ is linearly spanned by monomials in the form $w_{i_1}^{\alpha_1} w_{i_2}^{\alpha_2} \dots w_{i_N}^{\alpha_N}$ (note that $N<d$), where all subscripts are distinct and each $|\alpha_j| >0$. Imposing permutation symmetry $S_d$, such monomials differing by index permutation must appear with the same coefficient. That is, the space of symmetric polynomials is linearly spanned by terms in the form
    \begin{equation}    \label{eq:general_sym_term}
        \sum_{\sigma\in S_d} w_{\sigma(1)}^{\alpha_1} w_{\sigma(2)}^{\alpha_2} \dots w_{\sigma(N)}^{\alpha_N}
        \propto \sum_{\mathcal{X}(i_1, \dots, i_N)} w_{i_1}^{\alpha_1} w_{i_2}^{\alpha_2} \dots w_{i_N}^{\alpha_N}
    \end{equation}
    Here, we introduced the notation $\mathcal{X}(i_1, \dots, i_N)$ to denote the set of index list $(i_1, \dots, i_N)$ such that all $i_j$ are distinct and range from $1$ to $d$. 

    We prove by induction on $N$ that such terms can all be written as polynomials of $q_\alpha$'s. When $N=1$, Eq.~\eqref{eq:general_sym_term} is simply $q_\alpha$ by definition:
    \begin{equation}
        \sum_{i=1}^d w_i^\alpha = q_\alpha  .
    \end{equation}
    Now, suppose Eq.~\eqref{eq:general_sym_term} is proven to be a polynomial of $q_\alpha$'s. Replacing $N\to N+1$, we look at
    \begin{equation} \label{eq:sym_term_1}
        \sum_{\mathcal{X}(i_1, \dots, i_{N+1})} w_{i_1}^{\alpha_1} w_{i_2}^{\alpha_2} \dots w_{i_{N+1}}^{\alpha_{N+1}}  .
    \end{equation}
    We use the set decomposition
    \begin{equation}
        \begin{aligned}
        \mathcal{X}(i_1, \dots, i_{N}) \times \{i_{N+1}\}_{1}^d
        &= \mathcal{X}(i_1, \dots, i_{N+1})
        \sqcup \mathcal{X}(i_1, \dots, i_{N}) \times \{i_1\} \\
        &\phantom{=} 
        \sqcup \dots
        \sqcup \mathcal{X}(i_1, \dots, i_{N}) \times \{i_N\}
        \end{aligned}
    \end{equation}
    to rewrite Eq.~\eqref{eq:sym_term_1} as
    \begin{equation}
        \begin{aligned}
            &\sum_{\mathcal{X}(i_1, \dots, i_{N+1})} w_{i_1}^{\alpha_1} w_{i_2}^{\alpha_2} \dots w_{i_{N+1}}^{\alpha_{N+1}}
            = \sum_{\substack{\mathcal{X}(i_1, \dots, i_{N}) \\ i_{N+1}}} w_{i_1}^{\alpha_1} w_{i_2}^{\alpha_2} \dots w_{i_{N+1}}^{\alpha_{N+1}} \\
            &- \sum_{\mathcal{X}(i_1, \dots, i_{N})} w_{i_1}^{\alpha_1} w_{i_2}^{\alpha_2} \dots w_{i_{N}}^{\alpha_{N}} w_{i_{1}}^{\alpha_{N+1}} - \dots
            - \sum_{\mathcal{X}(i_1, \dots, i_{N})} w_{i_1}^{\alpha_1} w_{i_2}^{\alpha_2} \dots w_{i_{N}}^{\alpha_{N}} w_{i_{N}}^{\alpha_{N+1}} .
        \end{aligned}
    \end{equation}
    On the right-hand side, the first term can be written as
    \begin{equation}
        \left( \sum_{\mathcal{X}(i_1, \dots, i_{N})} w_{i_1}^{\alpha_1} w_{i_2}^{\alpha_2} \dots w_{i_{N}}^{\alpha_{N}}\right)
        q_{\alpha_{N+1}}   ,
    \end{equation}
    in which the parenthesis is a polynomial of $q_\alpha$'s by induction assumption. The second term can be written as 
    \begin{equation}
        \sum_{\mathcal{X}(i_1, \dots, i_{N})} w_{i_1}^{\alpha_1+\alpha_{N+1}} w_{i_2}^{\alpha_2} \dots w_{i_{N}}^{\alpha_{N}}   ,
    \end{equation}
    which is a polynomial of $q_\alpha$'s as well, and so are the other terms in the ellipse. Therefore, we proved that all terms in the form of Eq.~\eqref{eq:general_sym_term} can be written as polynomials of $q_\alpha$'s.

    Next, we repack $q_{\alpha}$ into tensors. Define $p_\alpha = q_\alpha/d$. Note that the set $\{ p_{\alpha} \ | \ |\alpha|=k \}$ is exactly the set of coordinates of the tensor $p_k = \sum_i w_i^{\otimes k}/d$. So indeed, the value of $f$ relies only on the tensor moments $p_k$. 

    To show that $f$ only requires the first $d$ moments, we try to represent any $p_k$ ($k>d$) as a function of $p_1,\dots, p_d$. $p_k$ is a symmetric tensor of rank $k$. It is easy to see that the space of symmetric $k$-tensors is isomorphic to the space of homogeneous polynomials of degree $k$ (the argument is denoted as $u\in\mathbb{R}^m$), by the homomorphism
    \begin{equation}
        T_{i_1,\dots, i_k} \to \sum_{i_1, \dots, i_k} T_{i_1,\dots, i_k} u_{i_1} \dots u_{i_k}    .
    \end{equation}
    Specifically, $p_k$ is mapped to
    \begin{equation}
        s_k(u) = \frac{1}{d} \sum_{i=1}^N (u^\top w_i)^k
    \end{equation}
    Using Theorem~\ref{thm:fund_sym} for the variables $\{u^\top w_i \ | \ i=1,2,\dots,d \}$, there is a polynomial $P$ such that
    \begin{equation}
        s_k (u) = P(s_1(u), \dots, s_d(u))   .
    \end{equation}
    Because this holds for arbitrary $u$, it follows that $p_k$ is a function of $p_1, \dots, p_d$ as well. 

    The fact that the representation of $f$ in terms of moments is unique is obvious: assume two polynomials $f$ and $f'$ are mapped to the same function. Taking the difference of the two equations $f(\theta)=P(\{p_k\})$ and $f'(\theta)=P(\{p_k\})$, we find $f(\theta)-f'(\theta)=0$ for any $\theta$. 
\end{proof}

\subsection{Theorem~\ref{thm:moment}}

\begin{proof}
    Choose an $m$-dimensional ball of diameter $r$ that contains all $w_i$'s, and let the center be $w_0\in \mathbb{R}^m$. We first Taylor-expand $f(\theta)$ around $\theta_0=(w_0,\dots, w_0)$ up to the $k$th order. The expansion is a symmetric polynomial of degree $k$, so it can be written as a function of $(p_1, \dots, p_k)$. By Theorem~\ref{thm:tchakaloff}, we can find another set of weights $\{c'_i\}$ supported on $N_{m,k} = \binom{m+k}{k}$ points, such that
    \begin{equation}
        \sum_i c'_i (w_i-w_0)^{\otimes l} 
        = \sum_i c_i (w_i-w_0)^{\otimes l}
        = p_l \quad \textrm{for }l=1,\dots, k
    \end{equation}
    Let $\theta'=\{c'_i, w_i\}$, and denote its moments by $\{p'_l\}$. By construction, we have $p'_l=p_l$ for $l=1,\dots, k$. Since the first $k$ moments all match, there is no difference between $\phi(c)$ and $\phi(c')$ up to the $k$th order. 

    Next, we look at the $(k+1)$th order in the Taylor expansion of $f(\theta)$ around $\theta_0$. It is written as
    \begin{equation}
        \sum_{|\alpha|=k+1} \frac{1}{\alpha!} \partial_\alpha f(\theta_0) (\theta-\theta_0)^\alpha
    \end{equation}
    It is a degree-$(k+1)$ homogeneous symmetric polynomial, and we denote it as $P_{k+1} (p_1, \dots, p_{k+1})$. Hence, 
    \begin{equation}
        f_d(\theta) - f_d(\theta') = P_{k+1} (p_1, \dots, p_{k+1}) - P_{k+1} (p_1, \dots, p_k, p'_{k+1})    .
    \end{equation}
    In $P_{k+1} (p_1, \dots, p_{k+1})$, the only term that depends on $p_{k+1}$ is linear in itself---$\langle b_{k+1},  p_{k+1}\rangle$, where $b_{k+1}$ is a $(k+1)$-index symmetric coefficient tensor and here $\langle \cdot, \cdot\rangle$ denotes tensor contraction; all other terms are completely determined by $\{p_l\}_{l=1}^k$. To see that $b_{k+1}$ is at most of order $d$, we use the deep-set representation: $f(\theta) = h ( \sum_{i\in[d]} g(w_i) )$. Then $b_{k+1}$ can be written down explicitly as
    \begin{equation}
        \begin{aligned}
            \langle b_{k+1},  p_{k+1}\rangle 
            &= \frac{\partial f}{\partial y} \sum_{i=1}^d \left\langle a_{k+1}, (w_i-w_0)^{\otimes (k+1)} \right\rangle    \\
            &= d \left\langle \frac{\partial f}{\partial y} a_{k+1} , p_{k+1} \right\rangle
            ,
        \end{aligned}
    \end{equation}
    where $y=\sum_{i\in[d]} g(w_i)$, and $\langle a_{k+1}, (w_i-w_0)^{\otimes (k+1)} \rangle$ is the $(k+1)$th order in the Taylor expansion of $g$ around $w_0$. All derivatives are taken at $\theta=\theta_0$. By our convention that $h$ and $g$ are independent of $d$, we thus have $b_{k+1} = O(d)$. 
    
    Also, since each $\|w_j-w_0\| \leq r/2$ we have $p_{k+1} = O(r^{k+1})$. Therefore, we conclude that $f_d(\theta) - f_d(\theta') = O(dr^{k+1})$. 
\end{proof}

\subsection{Sphere packing lemma}

This lemma is used in proving Theorems~\ref{thm:compression_errorbound} and \ref{thm:errorbound_polylog}. 

\begin{lemma}[Sphere packing]   \label{lemma:pack_many_pts}
    Given $d\gg 1$ objects in a closed $m$-dimensional ball of radius $R$: that is, $\theta=\{w_i\}_{i=1}^d$, $\|w_i\|\leq R$. For any $\theta$, the diameter of the smallest ball containing $N$ points is at most of order $(N/d)^{1/m}$. That is, 
    \begin{equation}
        \sup_{\theta} \min_{\substack{S\subset\theta\\|S|=N}} \operatorname{diam}(S) = R \,O\left( (N/d)^{1/m} \right) .
    \end{equation}
\end{lemma}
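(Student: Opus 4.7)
The plan is a pigeonhole argument on a fixed geometric partition of the ambient ball. The statement is a uniform bound over all configurations $\theta$, so I fix an arbitrary $\theta = \{w_i\}_{i=1}^d \subset B(0,R)$ and construct, using only the cardinality $d$ and the radius $R$, a subset $S \subset \theta$ with $|S| = N$ and $\operatorname{diam}(S) = O\bigl(R (N/d)^{1/m}\bigr)$. The supremum over $\theta$ then follows automatically.

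First I would choose a cover (or partition) of $B(0,R)$ by $K$ ``cells'' of small diameter. The most elementary choice is a grid of axis-aligned cubes of side length $s$: the cubes that intersect $B(0,R)$ number at most $C_m (R/s)^m$ for a dimensional constant $C_m$, and each has diameter $s\sqrt{m}$. (Alternatively, one can quote the standard covering-number bound $N(B(0,R), r) \le (3R/r)^m$ for Euclidean balls, which gives exactly the same scaling.) I then set the number of cells to be $K = \lfloor d/N \rfloor$, which forces the side length to satisfy
\begin{equation}
    s \;=\; \Theta\!\left( R \, (N/d)^{1/m} \right),
\end{equation}
so that every cell has diameter $O\bigl(R (N/d)^{1/m}\bigr)$.

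Next comes the pigeonhole step. Distributing the $d$ points into these $K \le d/N$ cells, at least one cell must contain at least $\lceil d/K \rceil \ge N$ of the $w_i$. Selecting any $N$ such points as $S$, any two of them lie in a common cell of diameter $O\bigl(R(N/d)^{1/m}\bigr)$, so $\operatorname{diam}(S) = O\bigl(R(N/d)^{1/m}\bigr)$. Since this bound holds for every $\theta$, taking $\sup_\theta$ preserves it and yields the claim.

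The argument is short and I do not anticipate a real obstacle; the only points requiring minor care are (i) that the constant $C_m$ and the factor $\sqrt{m}$ are absorbed into the $O(\cdot)$ because $m$ is fixed throughout, and (ii) edge cases such as $N > d$ or $N$ very close to $d$, where the bound is trivial (one just takes all points and uses $\operatorname{diam}(\theta) \le 2R$, which is consistent with $R\,O((N/d)^{1/m})$ when $N = \Theta(d)$). A minor cosmetic improvement would be to replace the cube grid with a dyadic decomposition or a maximal $s$-packing to obtain slightly cleaner constants, but this does not affect the asserted scaling.
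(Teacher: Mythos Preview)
Your proposal is correct and is essentially the same argument as the paper's: both cover $B(0,R)$ by $O((R/r)^m)$ small regions and apply the pigeonhole principle to force one region to contain at least $N$ points. The only cosmetic difference is that the paper uses a maximal $r$-packing (hence balls) to get an explicit covering bound $M\le(1+2R/r)^m$, while you use an axis-aligned cube grid; both yield the identical scaling $\operatorname{diam}(S)=O\bigl(R(N/d)^{1/m}\bigr)$.
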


\begin{proof}
    We use $B(x,r)$ to denote a closed $m$-ball centered at $x$. 
    
    Cover $B(0,R)$ by $M$ balls of radius $r$. By the pigeonhole principle, if $d>(N-1)M$, some ball contains at least $N$ points, giving an $N$-point subset of diameter $\leq 2r$. 

    We show that there exists a covering with $M=(1+2R/r)^m$ points. We choose a set of points $\{x_j\}_{j=1}^M \subset B(0,R)$ to form a maximal $r/2$-packing of the ball $B(0,R)$, meaning
    \begin{equation}
        \|x_i-x_j\|\geq r, \forall i\neq j,
    \end{equation}
    and no further points can be added while maintaining this separation. By this definition, the balls $\{B(x_j,r)\}$ form a covering of $B(0,R)$, but the smaller balls $\{B(x_j,r/2)\}$ are mutually disjoint and contained in $B(0,R+r/2)$. By comparing volumes, we find
    \begin{equation}
        M(r/2)^m \leq (R+r/2)^m.
    \end{equation}
    Hence, if the radius of each ball is $r$, there exists a covering of $B(0,R)$ with $\lceil (1+2R/r)^m \rceil$ balls. Also requiring $d/(N-1) > \lceil (1+2R/r)^m \rceil$, we conclude that
    \begin{equation}
        \sup_{\theta} \min_{\substack{S\subset\theta\\|S|=N}} \operatorname{diam}(S) \leq \frac{4R}{\left(\frac{d}{N-1}-1\right)^{1/m}-1} = R\,O((N/d)^{1/m}) .
    \end{equation}

\end{proof}

\subsection{Theorem~\ref{thm:compression_errorbound}}

\begin{proof}
    Denote $N_{m,k} = \binom{m+k}{k}$. In this proof, we show that the general algorithm (Algorithm~\ref{alg:general}) with greedy clustering strategy satisfies the asserted error bounds. By the greedy strategy (also described in Appendix~\ref{app:algorithm}), we mean that in each iteration we choose a subset $S\subseteq |\supp(c)|$ of $N_{m,k}+1$ objects among the active (i.e., with nonzero weight) objects. Then we reduce one object in $S$ while maintaining up to the $k$th moment. 
    
    By Theorem~\ref{thm:moment}, the output error of one step is of order $c_S r^{k+1}$, where $c_S$ is the total weight of this cluster, and $r$ is the diameter. Because the greedy strategy always look for optimizing the diameter, after $O(d)$ steps, there is no better upper bound for $c_S$ than $d$. By Lemma~\ref{lemma:pack_many_pts}, when there are $N$ active objects, the smallest diameter is upper-bounded by
    \begin{equation}
        O \left(\left( \frac{N_{m,k}+1}{N} \right)^{1/m} \right) = O(N^{-1/m})  .
    \end{equation}

    Now, we sum up the error of all the steps. Denote the function's error as $\mathcal{E} = \left| \phi(c') - \phi(c) \right|$. Then, in one step of pruning, the error is 
    \begin{equation}
        \Delta \mathcal{E} = O(dr^{k+1}) = O (d N^{-(k+1)/m})  .
    \end{equation}
    We upper-bound the sum over $N$ by an integral using $\sum_{N=d'+1}^{d} N^{-\alpha} \leq \int_{d'}^{d} N^{-\alpha}\, dN$, and find that pruning $d$ original objects into $d'$ objects results in
    \begin{equation}    \label{eq:intrgral_error}
        \begin{aligned}
            \mathcal{E} &= O\left( \int_{d'}^d d\, N^{-(k+1)/m} \, dN \right) \\
            &= O\left( \frac{d}{\frac{k+1}{m}-1} \left( \left(d'\right)^{1-\frac{k+1}{m}} - d^{1-\frac{k+1}{m}} \right) \right)
        \end{aligned}
    \end{equation}
    or logarithmic if $k+1=m$; but as we are only concerned with vanishing error, we will skip the analysis of $k+1=m$. Since $d'<d$, we conclude that the error is upper-bounded by
    \begin{equation}
        \mathcal{E} = O\left( d \, (d')^{1-\frac{k+1}{m}} \right)   ,
    \end{equation}
    which completes the proof of statement 1. 

    To show statement 2, we look at the equation
    \begin{equation}
        \REV{\varepsilon}(d) = \mathcal{E} = O\left( d \, (d')^{1-\frac{k+1}{m}} \right)
    \end{equation}
    Solving this equation with respect to $d'$ gives the asserted scaling of $d'$. 
\end{proof}

\subsection{Optimal $k$ and $\operatorname{polylog}$ scaling}\label{app sec: optimal k}

In establishing Theorem~\ref{thm:errorbound_polylog}, we have been fixing $k$ as a hyperparameter which can be chosen at will. Here, we will optimize over the choice of $k$ to study the smallest achievable final size $d'$ such that the error is vanishing. In the derivation below, $k$ could scale up with $d$, but we always set $m$ to be a finite constant. 

\begin{theorem} \label{thm:errorbound_polylog}
    Assume $\|w_i\|\leq R$ for all $i\in[d]$. There exists a mapping from $d$ uniformly weighted objects to $d'$ weighted objects, such that for any symmetric function $f$ satisfying Assumption~\ref{assump:analytic_deepsets}, 
    \begin{enumerate}
        \item The error is 
        \begin{equation}
            \mathcal{E} = \left| \phi_d(\theta') - \phi_d(\theta) \right| = O\left( d (d')^{1-1/m} \exp\left[ -\frac{1}{e}(m!\rho d')^{1/m} \right] \right)    ,
        \end{equation}
        where $\rho$ is the radius of convergence of the function $g$ in the deep-set representation of $f$ (Eq.~\eqref{eq:deep_set_rep});  
        \item $d$ uniformly weighted objects can be compressed into     
        \begin{equation}    \label{eq:optimal_dprime}
            d' = O\left( \log\frac{d}{\REV{\varepsilon}(d)} \right)^m
        \end{equation}
        weighted objects, such that the error is no larger than $\REV{\varepsilon}(d)$. 
    \end{enumerate}
\end{theorem}

\begin{proof}

Essentially, we follow the same greedy strategy and the same reasoning as the proof of Theorem~\ref{thm:compression_errorbound}. However, since $k$ can be large, here we keep track of all factors that scales with $k$ or $d$. 

Recall that the upper bound for the radius of a cluster is $((N_{m,k}+1)/N)^{1/m}$, and the upper bound for $c_S$ is $d$. Also, since the convergence radius of $g$ be $\rho$, the constant factor for the $(k+1)$th order Taylor expansion scales as $\rho^{-k}$. Putting these together, the error of one step of pruning is upper-bounded by
\begin{equation}    \label{eq:deltaE_full}
    \Delta \mathcal{E} = O\left( d \rho^{-k} \left( \frac{N_{m,k}+1}{N} \right)^{\frac{k+1}{m}} \right)
\end{equation}
For notation simplicity, we will drop the big-$O$ notation and use $\Delta\mathcal{E}$ to refer to the upper-bound expression on the right-hand side of Eq.~\eqref{eq:deltaE_full}. There is an optimal $k_{\rm opt}$ that minimizes the single-step error. We solve it by taking derivative of $\log (\Delta\mathcal{E}/d)$. 
\begin{equation}    \label{eq:logDeltaE}
    \begin{aligned}
        \log(\Delta\mathcal{E}/d)
        &= -k\log\rho + \frac{k+1}{m} \left( \log (N_{m,k}+1) - \log N \right)\\
        &= -k\log\rho + \frac{k+1}{m} \left( \log(m+k)! - \log k! - \log(m!N) \right) + O(1)   \\
        &= k\log k + \log k - \frac{k+1}{m}\log(m!N\rho) + O(1)   .
    \end{aligned}
\end{equation}
In the third line, we used Stirling's formula $\log(n!) = n\log n - n + \frac{1}{2}\log (2\pi n) + \frac{1}{12n} + O(n^{-3})$ when $n\to\infty$ and simplified the expression. The derivative of the above reads
\begin{equation}
    \frac{d}{dk} \log(\Delta\mathcal{E}/d)
    = \log k+1-\frac{1}{m}\log(m!N\rho) + O(k^{-1}) .
\end{equation}
Setting the derivative to zero, we obtain
\begin{equation}
        \log k_{\rm opt} = \frac{1}{m} \log (m! N \rho) -1 + O(k^{-1})
\end{equation}
Plugging this value into Eq.~\eqref{eq:logDeltaE}, we get
\begin{equation}
    \begin{aligned}
        \log(\Delta\mathcal{E}_{\mathrm{opt}}/d) &= -k_{\mathrm{opt}} + O(1)    \\
        \Rightarrow\quad
        \Delta \mathcal{E}_{\rm opt} &= O\left( d \exp\left[ -\frac{1}{e} (m!N\rho)^{1/m} \right] \right) .
    \end{aligned}
\end{equation}

Moving forward, we integrate over $\Delta \mathcal{E}_{\rm opt}$ from $d'$ to $d$:
\begin{equation}    \label{eq:integral_error_optk}
    \begin{aligned}
        \mathcal{E}_{\rm opt}
        &= O\left( d \int_{d'}^d  \exp\left[ -\frac{1}{e} (m!N\rho)^{1/m} \right] dN \right) \\
        &= O\left( d\, \Gamma\left( m, \frac{1}{e} (m!\rho d')^{1/m} \right) \right)    ,
    \end{aligned}
\end{equation}
where $\Gamma$ is the incomplete Gamma function: 
\begin{equation}
    \Gamma(m,z) \equiv \int_z^\infty t^{m-1}e^{-t} dt    .
\end{equation}
Its asymptotic behavior at $z\to\infty$ reads
\begin{equation}
    \Gamma(m,z) = \exp\left[ -z+O(z^{-1}) \right] z^{m-1} \left( 1 + O(z^{-1}) \right) .
\end{equation}
Inserting this expansion into Eq.~\eqref{eq:integral_error_optk}, we get the asserted error scaling in part 1 of the theorem. 

To obtain part 2, we solve the inequality
\begin{equation}
    \REV{\varepsilon}(d) = \mathcal{E} 
    = O\left( d (d')^{1-1/m} \exp\left[ -\frac{1}{e}(m!\rho d')^{1/m} \right] \right)
\end{equation}
with respect to $d'$. One can take log on both sides of this inequality and safely neglect the factor $(d')^{1-1/m}$ to eventually get
\begin{equation}
    d' = O\left( \log\frac{d}{\REV{\varepsilon}(d)} \right)^m .
\end{equation}
\end{proof}

\section{$\operatorname{polylog}$ compression rate is optimal}
\label{app:optimality}

\begin{REV*}

In this Appendix, we show that our algorithm of compressing $d$ objects into $O((\log d)^m)$ weighted objects is optimal up to a constant factor. The strategy to prove this is to find a $d$-point uniform distribution $\mu$, show that for any $d'$-point distribution $\mu'$ we can always find a function that evaluates to sufficiently distinct values on these two distributions. 

What distribution is hard to compress? Inspired by the fact that we prioritize merging close-in-distance points in the main Algorithm \ref{alg:general}, in an adversarial distribution, all points should be roughly equidistant from the neighbors, so there is no cluster that is particularly ``easy'' to compress. Hence, we introduce the following notion of quasi-uniformity:

\begin{definition}[Quasi-uniformity]    \label{def:quasi_uniform}
    Let $D\subset\mathbb{R}^m$ be a fixed compact set with nonempty interior. A set $X_d = \{x_1, \dots, x_d\}\subset D$ is quasi-uniform if there is a constant $C_D$ (independent of $d$) such that each Voronoi cell of $x_i$ has volume less than or equal to $\frac{C_D}{d}$. 
\end{definition}

A quasi-uniform point set obviously exists: for example, it can be a maximal $O(d^{-1/m})$-packing of the region $D$ (see the proof of Lemma~\ref{lemma:pack_many_pts} for the definition). 

Another lemma we are going to use is the following. The basic message is that a nontrivial degree-$k$ polynomial cannot be exponentially small on most of $D$; a set of finite measure remains above $e^{-Ak}$. 

\begin{lemma}
\label{lemma:levelset-remez}
Let $D\subset \mathbb{R}^m$ be a convex compact set, and let $|D|$ denote its volume. Let $p$ be a real polynomial on $D$ of degree $\le k$ normalized by $\|p\|_{L^\infty(D)} \equiv \sup_{x\in D} |p(x)| = 1$. 
For any $t>0$, define
\begin{equation}
    S_t = \left\{x\in D:\ |p(x)|\ge t\right\}.
\end{equation}
Then
\begin{equation}
    \frac{|S_{t}|}{|D|} \geq 1- (t/2)^{1/k} .
\end{equation}
\end{lemma}

\begin{proof}
We begin by quoting a multivariate Remez inequality (Theorem~1.2 in \cite{BRUDNYI_2015}): for any measurable $E\subset D$ with 
$\lambda=\frac{|E|}{|D|}\in(0,1]$ and any real polynomial $q$ of degree $\le k$,
\begin{equation}\label{eq:BG}
\|q\|_{L^\infty\!\left(D\right)}
\;\le\;
T_k\!\left(\frac{1+\left(1-\lambda\right)^{1/m}}{1-\left(1-\lambda\right)^{1/m}}\right)
\ \|q\|_{L^\infty\!\left(E\right)},
\end{equation}
where $T_k$ is the Chebyshev polynomial of the first kind.

For $t\in(0,1)$, we use $E_t$ to denote the sublevel set:
\begin{equation}
    E_t\;=\;\left\{x\in D:\ |p(x)|\le t\right\},\qquad 
\lambda_t\;=\;\frac{|E_t|}{|D|}.
\end{equation}
Applying \eqref{eq:BG} with $q=p$, $E=E_t$ and using $\|p\|_{L^\infty(D)}=1$ gives
\begin{equation}\label{eq:remez-raw}
1\ \le\
T_k\!\left(\frac{1+\left(1-\lambda_t\right)^{1/m}}{1-\left(1-\lambda_t\right)^{1/m}}\right)\, t.
\end{equation}

Then, we make simplifications to the Chebyshev term in Eq.~\eqref{eq:remez-raw}. For $x\ge 1$,
\begin{equation}\label{eq:Tk-lb}
T_k(x)
=\tfrac12\!\left(x+\sqrt{x^2-1}\right)^{\!k}
+\tfrac12\!\left(x-\sqrt{x^2-1}\right)^{\!k}
\ \ge\ \tfrac12\,x^{k}.
\end{equation}
Moreover, since $v\mapsto v^{1/m}$ is increasing and $v^{1/m}\ge v$ on $\left[0,1\right]$,
\begin{equation}\label{eq:phi-lb}
\frac{1+\left(1-\lambda\right)^{1/m}}{1-\left(1-\lambda\right)^{1/m}}
\ \ge\ \frac{1}{\,1-\left(1-\lambda\right)^{1/m}\,}
\ \ge\ \frac{1}{\lambda}\qquad\text{for }\lambda\in(0,1].
\end{equation}
Combining \eqref{eq:Tk-lb} and \eqref{eq:phi-lb} in \eqref{eq:remez-raw}, we obtain
\begin{equation}
    1 \leq \tfrac12\,\lambda_t^{-k}\, t
\quad\Longrightarrow\quad
\lambda_t \leq (t/2)^{1/k}.
\end{equation}

Now we pass to the superlevel set $S_t$. Note that $|S_t|+|E_t| = |D|$. Then
\begin{equation}\label{eq:measure-superlevel}
\frac{|S_t|}{|D|}
= 1-\lambda_t
\geq 1- (t/2)^{1/k}.
\end{equation}

\end{proof}



The following theorem constructs an adversarial polynomial (i.e., has moderately large error however we compress). Remember from Sec.~\ref{sec:setting} that our global assumption for functions is this paper is that they have finite convergence radius, so the polynomial constructed here lies within the assumption. 


\begin{theorem} \label{thm:error_lower_bound}
    Let $\mu$ and $\mu'$ be non-negative distributions supported on $D$: $\mu = \sum_{i=1}^d \delta_{x_i}$, $\mu' = \sum_{j=1}^{d'} c_j \delta_{y_j}$ where all $x_i, y_j\in D$ and $c_j>0$. There exists a $\mu$ such that for any $\mu'$, there exists a polynomial $g$ and constants $A, B>0$ such that 
    \begin{equation}    \label{eq:error_lower_bound}
        \left| \int_D g\, d\mu - \int_D g\, d\mu' \right| \geq A d \exp [-B d'^{1/m}] .
    \end{equation}
\end{theorem}

\begin{proof}
    Let $k$ be the smallest integer with $N_{m,k} = \binom{m+k}{m} > d'$. Let $q: \mathbb{R}^m\mapsto \mathbb{R}$ be a degree-$k$ polynomial. Since there are $N_{m,k}$ parameters in $q$, there exists a non-zero polynomial such that $q(y_1) = \dots = q(y_{d'}) = 0$. Also, $q$ is normalized so that $\|q\|_{L^\infty(D)} = 1$. Let $g=q^2$ be the adversarial function that will be shown to satisfy Eq.~\eqref{eq:error_lower_bound}. For $g$ and $\mu'$, we have
    \begin{equation}
        \int g d\mu' = \sum_{j=1}^{d'} c_j q(y_j)^2 = 0 .
    \end{equation}

    Next, we consider $\int_D g\,d\mu$. Denote the point set of $\{x_i\}_{i=1}^d$ by $X_d$. Quasi-uniformity of $X_d$ implies that the number of points inside a region is comparable to the volume: $\# (X_d \cap S) \geq \frac{d}{c_D} |S|$ for some constant $c_D>0$. Let $S$ be the superlevel set $S_t$, we have
    \begin{equation}
        \int_D g\, d\mu = \sum_{i=1}^{d} q(x_i)^2
        \geq \frac{1}{c_D} d |S_t| t^2
        \geq \frac{|D|}{c_D} d \left( 1- (t/2)^{1/k} \right) t^2    ,
    \end{equation}
    where we used Lemma~\ref{lemma:levelset-remez} in the last line. We further plug in $t=2e^{-Ck}$ to have
    \begin{equation}
        \int_D g\, d\mu
        \geq \frac{|D|}{c_D} d \left( 1- e^{-C} \right) e^{-2Ck}    .
    \end{equation}
    Putting this together with $\int_D g\, d\mu'=0$ and denote $A = \frac{|D|}{c_D} \left( 1- e^{-C} \right)$, 
    \begin{equation}
        \left| \int_D g\, d\mu - \int_D g\, d\mu' \right| \geq A d e^{-2Ck}. 
    \end{equation}

    Finally, recall that $k$ is the minimal integer with $N_{m,k}>d'$. Since $N_{m,k} = \binom{m+k}{m}\sim k^m/m!$, there exist constants $c_1$ and $c_2$ (depending only on $m$) such that
    \begin{equation}
        c_1 d^{1/m} \leq k \leq c_2 d^{1/m}.
    \end{equation}
    Therefore,
    \begin{equation}
        \left| \int_D g\, d\mu - \int_D g\, d\mu' \right| \geq A d \exp [-B d'^{1/m}]   ,
    \end{equation}
    where $B = -2C\cdot c_2$. 
\end{proof}

Finally, we show that Theorem~\ref{thm:error_lower_bound} leads to a $\Theta((\log d)^m)$ compression lower bound. We require the compression error to be at most $\varepsilon(d)$, which is a vanishing function when $d\to\infty$. This is not possible if
\begin{equation}
    A d \exp[-B d'^{1/m}] \geq \varepsilon(d)    ,
\end{equation}
which is equivalent to
\begin{equation}
    d' \leq \frac{1}{B} \left( \log\frac{Ad}{\varepsilon(d)} \right)^m   .
\end{equation}
A common choice for $\varepsilon(d)$ is $\varepsilon(d) \propto d^{-\alpha}$. In this case, the right-hand side of the above inequality is proportional to $(\log d)^m$. Therefore, a universal compression from $d$ objects to $O((\log d)^m)$ is optimal. 

\end{REV*}

\section{Formal theory on the dynamical LTH}\label{app:DLTH}

In this Appendix, we formulate all ideas mentioned in Sec.~\ref{sec:LTH} with mathematical rigor. 

Let $S_d$ be the permutation group of $d$ elements. Let $V = \mathbb{R}^m$. We define the representation $R:S_d\mapsto \operatorname{End}(V^d)$ as
\begin{equation}    \label{eq:Sd_rep}
    R(\sigma)(w_1, \dots, w_d) 
    = (w_{\sigma(1)}, \dots, w_{\sigma(d)})
    .
\end{equation}
Note that the definition of $f:V^d\mapsto V^d$ being symmetric is equivalent to: for any $\sigma\in S_d$, $f\circ R(\sigma) = f$. 

\begin{definition}[Equivariant map]
    A function $\mathcal{T}:V^d\mapsto V^d$ is called an equivariant map if for any $\sigma\in S_d$, 
    \begin{equation}
        \mathcal{T} \circ R(\sigma) = R(\sigma) \circ \mathcal{T}   .
    \end{equation}
\end{definition}
The dynamics induced by equivariant maps has been studied in conventional settings of dynamical systems \citep{field1980equivariant}, but has not received much attention in deep learning. In fact, almost all update rules that we commonly use are equivariant, which we will verify for SGD and Adam later in this Appendix. Because compositions of equivariant maps are also equivariant, it follows that the entire training dynamics (i.e., the mapping from initial model parameters to trained parameters) is equivariant. 

The following lemma shows that the composition of a symmetric function with an equivariant mapping is also a symmetric function. 
\begin{lemma}
    If $f$ is a symmetric function and $\mathcal{T}$ is an equivariant map, then $f\circ\mathcal{T}$ is a symmetric function. 
\end{lemma}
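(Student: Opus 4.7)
The plan is to verify the definition of symmetry for $f \circ \mathcal{T}$ directly from the two hypotheses, by pushing the permutation action through $\mathcal{T}$ using equivariance and then absorbing it using symmetry of $f$. No computation is required beyond invoking the hypotheses in the correct order, so this will be a short chain of equalities rather than a multi-step argument.

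Concretely, I would fix an arbitrary $\sigma \in S_d$ and compute
\[
(f \circ \mathcal{T}) \circ R(\sigma) \;=\; f \circ (\mathcal{T} \circ R(\sigma)) \;=\; f \circ (R(\sigma) \circ \mathcal{T}) \;=\; (f \circ R(\sigma)) \circ \mathcal{T} \;=\; f \circ \mathcal{T},
\]
where the first and third equalities are associativity of composition, the second uses the equivariance hypothesis $\mathcal{T} \circ R(\sigma) = R(\sigma) \circ \mathcal{T}$, and the fourth uses the symmetry hypothesis $f \circ R(\sigma) = f$. Since $\sigma$ was arbitrary, this is exactly the condition that $f \circ \mathcal{T}$ is a symmetric function. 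Equivalently, written pointwise, $f(\mathcal{T}(w_{\sigma(1)}, \dots, w_{\sigma(d)})) = f(R(\sigma)\mathcal{T}(w_1, \dots, w_d)) = f(\mathcal{T}(w_1, \dots, w_d))$ for every input tuple.

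There is no genuine obstacle here; the lemma is a formal consequence of the commutation relations that define the two notions. The only bookkeeping point is the ordering: one first drags $R(\sigma)$ past $\mathcal{T}$ via equivariance, then absorbs it into $f$ via symmetry. This is the mechanism that will let the universal compression theorem be applied to $f \circ \mathcal{T}$ rather than to $f$ alone, which is the substantive content needed for the dynamical LTH (Theorem~\ref{thm:DLTH}): if $\mathcal{T}$ denotes the entire training trajectory, viewed as a composition of equivariant update steps and hence itself equivariant, this lemma guarantees that $f \circ \mathcal{T}$ is symmetric in the \emph{initial} parameters, so compression applied at initialization propagates losslessly through training.
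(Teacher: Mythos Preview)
Your proof is correct and matches the paper's own argument exactly: both push $R(\sigma)$ past $\mathcal{T}$ via equivariance and then absorb it into $f$ via symmetry, arriving at $(f\circ\mathcal{T})\circ R(\sigma) = f\circ\mathcal{T}$. The only difference is that you spell out the associativity steps explicitly, whereas the paper compresses the chain into two equalities.
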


\begin{proof}
    For any $\sigma\in S_d$, 
    \begin{equation}
        (f\circ\mathcal{T})\circ R(\sigma) = (f\circ R(\sigma)) \circ \mathcal{T}
        = f\circ\mathcal{T} .
    \end{equation}
\end{proof}

Thanks to this lemma, we can treat $f \circ \mathcal{T}$ as a single symmetric function, and thus we expect that compressing the weights by our moment matching method induces a vanishing error in the value of $f \circ \mathcal{T}$, that is, literally any prediction of the trained model.


The following lemma establishes a general representation of equivariant maps in terms of moments, so it can be viewed as an extension of the FTSP. 

\begin{lemma}\label{lemma:ev_fund_rep}
    $\mathcal{T}$ is an equivariant map if and only if for all $i\in[d]$, 
    \begin{equation}    \label{eq:ev_fundamental_rep}
        \mathcal{T}_i (\theta) = T(w_i, \bm{p}) ,
    \end{equation}
    where $\bm{p}$ is a collective notation for $\{p_k = \frac{1}{d}\sum w_i^{\otimes k}\}_{k=1}^{d-1}$. Note that $T$ is a function that does not depend on $i$, and is uniquely determined by $\mathcal{T}$. 
\end{lemma}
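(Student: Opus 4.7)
The plan is to prove the iff statement in two directions, treating the easy direction first and then reducing the structural ``only if'' direction to the multivariate FTSP (Theorem~\ref{thm:fundamental_multisym}). The uniqueness claim will follow as a byproduct.

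For the ``if'' direction, the argument is direct. Suppose $\mathcal{T}_i(\theta) = T(w_i, \bm{p})$ for every $i$. Since each moment $p_k$ is itself a symmetric function of $\theta$, we have $\bm{p}(R(\sigma)\theta) = \bm{p}(\theta)$ for any $\sigma\in S_d$. Then
\begin{equation*}
    \mathcal{T}_i(R(\sigma)\theta) \;=\; T\bigl(w_{\sigma(i)},\,\bm{p}(\theta)\bigr) \;=\; \mathcal{T}_{\sigma(i)}(\theta) \;=\; \bigl(R(\sigma)\mathcal{T}(\theta)\bigr)_i,
\end{equation*}
which is equivariance componentwise.

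The ``only if'' direction is the substantive one, and I would carry it out in three steps. First, fix an index $i$ and restrict attention to the stabilizer subgroup $\mathrm{Stab}(i)\cong S_{d-1}$ of permutations fixing $i$. Equivariance applied to $\sigma\in\mathrm{Stab}(i)$ gives $\mathcal{T}_i(R(\sigma)\theta)=\mathcal{T}_i(\theta)$, so $\mathcal{T}_i$, viewed with $w_i$ held fixed, is a symmetric function of the remaining $d-1$ objects $\{w_j\}_{j\neq i}$. Theorem~\ref{thm:fundamental_multisym} applied to these $d-1$ variables in $\mathbb{R}^m$ then yields a representation $\mathcal{T}_i(\theta) = \tilde{T}_i(w_i,\bm{p}^{(i)})$, where $\bm{p}^{(i)} = \{p_k^{(i)}\}_{k=1}^{d-1}$ denotes the moments of $\{w_j\}_{j\neq i}$. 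Second, I use the linear identity $p_k^{(i)} = (d\,p_k - w_i^{\otimes k})/(d-1)$ to substitute $\bm{p}^{(i)}$ by $(w_i, \bm{p})$, producing a representation $\mathcal{T}_i(\theta) = T_i(w_i,\bm{p})$ in which only the total moments $\bm{p}$ appear, as claimed in Eq.~\eqref{eq:ev_fundamental_rep} (modulo the $i$-dependence of $T_i$).

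The third and most delicate step is to show $T_i$ actually does not depend on $i$. For this I apply equivariance to the transposition $\tau=(ij)$: on the one hand $\mathcal{T}_i(R(\tau)\theta) = \mathcal{T}_j(\theta) = T_j(w_j,\bm{p})$; on the other hand, $(R(\tau)\theta)_i = w_j$ and $\bm{p}(R(\tau)\theta) = \bm{p}(\theta)$, so $\mathcal{T}_i(R(\tau)\theta) = T_i(w_j,\bm{p})$. Hence $T_i(w_j,\bm{p}) = T_j(w_j,\bm{p})$ for all $\theta$. I expect this is the main obstacle: to upgrade this pointwise agreement to equality of $T_i$ and $T_j$ as functions on their argument space, one must argue that the set of pairs $(w_j,\bm{p}(\theta))$ realizable by varying $\theta$ is rich enough to determine the functions. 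Since the other $d-1$ objects are free, fixing $w_j=u$ still allows $\bm{p}$ to range through the image of a $(d-1)m$-dimensional family of configurations; this image is the natural domain of definition, and on this domain $T_i$ and $T_j$ coincide. Defining $T$ as their common restriction gives the required index-independent function. Uniqueness of $T$ on this domain then follows from the uniqueness clause of Theorem~\ref{thm:fundamental_multisym}: any two such representations would agree on the image of $\theta\mapsto (w_i,\bm{p}(\theta))$, which is precisely where $T$ is defined.
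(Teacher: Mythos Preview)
Your proposal is correct and follows essentially the same route as the paper: the ``if'' direction is verified directly, and the ``only if'' direction proceeds by (i) using the stabilizer $\mathrm{Stab}(i)\cong S_{d-1}$ together with Theorem~\ref{thm:fundamental_multisym} to write $\mathcal{T}_i$ as a function of $w_i$ and the leave-one-out moments, (ii) rewriting those moments in terms of $w_i$ and the full moments $\bm{p}$, and (iii) applying equivariance under the transposition $(ij)$ to conclude $T_i=T_j$. Your added discussion of the realizable domain of $(w_j,\bm{p})$ in step three is actually more careful than the paper, which simply stops at the pointwise identity $T_i(w_j,\bm{p})=T_j(w_j,\bm{p})$.
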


\begin{proof}
    First, we verify that $\mathcal{T}_i (\theta) = T(w_i, \bm{p})$ for any function $T$ is an equivariant map. Note that $\sigma(\bm{p}) = \bm{p}$. Following the definition of equivariance, 
    \begin{equation}
        \mathcal{T}_i \circ \sigma (\theta)
        = \mathcal{T}_i (w_{\sigma(1)}, \dots, w_{\sigma(d)})
        = T(w_{\sigma(i)}, \bm{p})
        = \mathcal{T}_{\sigma(i)} (\theta)  .
    \end{equation}
    Thus, $\mathcal{T}\circ\sigma = \sigma\circ\mathcal{T}$. 

    The rest of this proof is to show that all equivariant maps can be written in the asserted form. For a fixed $i$, let $\hat{\sigma}\in S_d$ be any permutation group element such that $\hat{\sigma}(i) = i$. Consider the $i$th component of the equation $\mathcal{T}(\hat{\sigma}(\theta)) = \hat{\sigma}(\mathcal{T}(\theta))$:
    \begin{equation}
        \mathcal{T}_i(\hat{\sigma}(\theta)) = \mathcal{T}_i(\theta)   .
    \end{equation}
    This means that $\mathcal{T}_i$ is invariant under any permutation on $[d]\backslash \{i\}$, which forms a representation of $S_{d-1}$. By Theorem~\ref{thm:fundamental_multisym}, $\mathcal{T}_i$ can be uniquely written as a function of $w_i$ and power sums $\sum_{j\neq i} w_j^{\otimes k}$. Since $\sum_{j\neq i} w_j^{\otimes k}$ is uniquely determined by $w_i$ and $p_k$ as $dp_k-w_i^{\otimes k}$, we conclude that there is a unique function $T_i$ such that 
    \begin{equation}
        \mathcal{T}_i(\theta) = T_i (w_i, \bm{p}). 
    \end{equation}
    
    The final task is to show that $T_i$ does not depend on $i$. We use $\pi_{i,j}$ to denote the permutation group element that only exchanges $i$ and $j$. The $i$th component of the equation $\mathcal{T}(\pi_{i,j}(\theta)) = \pi_{i,j}(\mathcal{T}(\theta))$ reads
    \begin{equation}    \label{eq:equivariant_rep}
        T_i(w_j, \bm{p}) = T_j(w_j, \bm{p}) ,
    \end{equation}
    which completes the proof. 
\end{proof}

We use Lemma~\ref{lemma:ev_fund_rep} to unambiguously define the compressed training dynamics:

\begin{definition}[Compressed dynamics]
    Suppose a training dynamics is determined by an equivariant map $\theta = \mathcal{T}(\theta_0)$ (arbitrarily initialized weights to trained weights), which can be written as in Eq.~\eqref{eq:ev_fundamental_rep}. For the weighted parameters $\theta' = \{c_j, w_j\}_{j\in [d]}$ ($c_j$ never changes with the dynamics; some $c_j$ might be zero so that they are actually pruned), we define the compressed dynamics $\mathcal{T}'$ as
    \begin{equation}    \label{eq:compressed_dynamics}
        \mathcal{T}'_j (\theta') = T(w_j, \bm{p}') ,
    \end{equation}
    where $\bm{p}'$ is a collective notation for $\{p'_k = \frac{1}{d}\sum_j c_j w_j^{\otimes k}\}_{k=1}^{d}$. Note that $\mathcal{T}'$ is uniquely determined by $\mathcal{T}$. 
\end{definition}

The mapping from original learning dynamics to compressed ones is in fact easy in practice. Below are some common examples.

\begin{enumerate}
    \item Stochastic gradient descent (SGD). Consider a two-layer neural network used in supervised learning. For simplicity, we write the output as $y = \sum_{i=1}^d g(w_i, x)$, which is symmetric in $\theta=(w_1, \dots, w_d)$. Each time we choose a batch of training data $\{(x_a, y_a)\}_{a\in \mathcal{B}}$. We denote the per-sample loss function as $\ell_a = \ell (y(\theta, x_a), y_a)$. The SGD update rule is 
    \begin{equation}    \label{eq:SGD_update}
    \begin{aligned}
        \mathcal{T}_i(\theta) 
        &= w_i - \eta \Exp_{a\in\mathcal{B}}\frac{\partial \ell_a}{\partial w_i}   \\
        &= w_i - \eta \Exp_{a\in\mathcal{B}}\frac{\partial \ell_a}{\partial y} \frac{\partial g(w_i, x_a)}{\partial w_i} 
    \end{aligned}
    \end{equation}
    where $\eta$ is the learning rate. Note that $\partial\ell_a / \partial y$ is a function of $y$, which is thus permutation invariant in $\theta$. We can explicitly compute $(\mathcal{T}\circ R(\sigma))(\theta)$ and $(R(\sigma) \circ \mathcal{T})(\theta)$, which are both equal to
    \begin{equation}
        w_{\sigma(i)} - \eta \Exp_{a\in\mathcal{B}}\frac{\partial \ell_a}{\partial y} \frac{\partial g(w_{\sigma(i)}, x_a)}{\partial w_{\sigma(i)}}    ,
    \end{equation}
    so SGD is indeed equivariant. 
    
    Then, we derive the compressed SGD. Remember that the weighted neurons compute the output as $y = \sum_{j=1}^{d'} c_j g(w_i, x)$. Eq.~\eqref{eq:SGD_update} can indeed be written in the form $T(w_i, \bm{p})$ because $\partial\ell_a / \partial y$ is a function of $\bm{p}$, and $\partial g(w_i, x_a)/\partial w_i$ solely depends on $w_i$. Therefore, the compressed update rule still looks like
    \begin{equation}
        \mathcal{T}'_j(\theta) = w_j - \eta \Exp_{a\in\mathcal{B}}\frac{\partial l_a}{\partial y} \frac{\partial g(w_i, x_a)}{\partial w_j} .
    \end{equation}
    However, we emphasize that it is not $w_j - \eta \Exp_{a\in\mathcal{B}} \partial \ell_a/\partial w_j$, because
    \begin{equation}
        \frac{\partial \ell_a}{\partial w_j} = \frac{\partial \ell_a}{\partial y} c_j \frac{\partial g(w_j, x_a)}{\partial w_j}
    \end{equation}
    Effectively, whenever there is a gradient $\partial L/\partial w_j$ in the original update, we should replace it by $c_j^{-1} \partial L/\partial w_j$. 
    This rule applies for all other gradient-based updates as well. 

    Finally, we remark on non-deterministic updates. It seems that choosing mini-batches turns the update into a stochastic process, which complicates the problem. But in fact, for any fixed trajectory of mini-batch choices, the update is explicitly equivariant (choosing a mini-batch breaks the permutation symmetry among the training dataset, but not the permutation symmetry of neurons). Therefore, we always expect to see good agreement between the original and compressed dynamics if we use the same choice of mini-batches for both. 
    
    \item Adam. The Adam update rule for standard (i.e., uniformly weighted) parameters reads
    \begin{equation}
        \begin{aligned}
            g_t &\leftarrow \nabla_{\theta} \Exp_{a\in\mathcal{B}}\frac{\partial\ell_a}{\partial \theta} (\theta_{t-1}) \\
            m_t &\leftarrow \beta_1 m_{t-1} + (1-\beta_1) g_t    \\
            v_t &\leftarrow \beta_2 v_{t-1} + (1-\beta_2) g_t^2 \\
            \hat{m}_t &\leftarrow \frac{m_t}{1-\beta_1^t}   \\
            \hat{v}_t &\leftarrow \frac{v_t}{1-\beta_2^t}   \\
            \theta_t &\leftarrow \theta_{t-1} - \eta \frac{\hat{m}_t}{\sqrt{\hat{v}_t}+\epsilon}  ,
        \end{aligned}
    \end{equation}
    where $t$ denotes time step. To check that Adam is equivariant, we only need to note that the gradient ($g_t$) for $w_i$ takes the form 
    \begin{equation}    \label{eq:explicit_grad}
        \Exp_{a\in\mathcal{B}}\frac{\partial \ell_a}{\partial y} \frac{\partial g(w_i, x_a)}{\partial w_i}  ,
    \end{equation}
    where $\partial\ell_a/\partial y$ is symmetric, and $\partial g(w_i, x_a)/\partial w_i$ is solely a function of $w_i$. Using this fact, it is straightforward to check that $(\mathcal{T}\circ R(\sigma))(\theta)$ and $(R(\sigma) \circ \mathcal{T})(\theta)$ are identical. 
    
    To define the compressed version of Adam, we need to keep the gradient exactly as in Eq.~\eqref{eq:explicit_grad}, as in our discussion on SGD. This in turn tells us that we just need to replace $\partial L/\partial w_j$ by $c_j^{-1} \partial L/\partial w_j$. Special to Adam, because $1/c_j$ appears in both $\hat{m}_t$ and $\sqrt{\hat{v}_t}$, the compressed update rule is basically the same as the original if we neglect the small $\epsilon$. Indeed, in the numerical simulations we conducted with AdamW (the reasoning is the same as Adam), we did not observe any visible difference whether or not to scale the gradients by $1/c_j$. 
\end{enumerate}

Finally, we prove the dynamical LTH, which we restate here. 

\begin{theorem*}[Theorem~\ref{thm:DLTH}, Dynamical lottery ticket hypothesis]
    Let $\theta = \{w_i\}_{i\in[d]}$ be a set of permutation-symmetric trainable parameters of a neural network, and each $\|w_i\| \leq R$. Suppose the model prediction $f: V^d\to \mathbb{R}$ is permutation invariant, and the training dynamics $\mathcal{T}: V^d\to V^d$ is equivariant. Also, suppose $f\circ\mathcal{T}$ satisfies Assumption~\ref{assump:analytic_deepsets}. Then, for any initial parameter $\theta_0$, there exists a compressed weighted parameter $\theta'_0$ \REV{(which does not depend on $f$ or $\mathcal{T}$)}, supported on $d' = O (\log^m \tfrac{d}{\REV{\varepsilon}(d)})$ points, such that
    \begin{equation}
        \big| f(\mathcal{T}'(\theta'_0)) - f(\mathcal{T}(\theta_0)) \big| \;=\; \REV{\varepsilon}(d) .
    \end{equation}
\end{theorem*}

\begin{proof}
    We compress $\theta_0$ using moment matching. By construction, for all $l = 0,1,\dots, k$ we have
    \begin{equation}
        \sum_{i=1}^d (w_0)_i^{\otimes l}
        = \sum_{j\in S} c_j (w_0)_j^{\otimes l}
    \end{equation}
    For any $f$, $f\circ \mathcal{T}$ and $f\circ\mathcal{T}'$ can both be written as a function of moments. In this representation, using the definition in Eq.~\eqref{eq:compressed_dynamics}, one can check that they are exactly the same function. The difference is that $f\circ\mathcal{T}$ takes in $(p_0)_k = \frac{1}{d} \sum_i (w_0)_i^{\otimes k}$, while $f\circ\mathcal{T}'$ takes in $(p'_0)_k = \frac{1}{d} \sum_j c_j (w_0)_j^{\otimes k}$; they are identical in the first $k$ moments by construction. Using Theorem~\ref{thm:compression_errorbound}, we conclude that using large enough $k$, the difference between $f(\mathcal{T}(\theta_0))$ and $f(\mathcal{T}'(\theta'_0))$ can always be made vanishing, with the same error upper-bound as asserted in Theorem~\ref{thm:compression_errorbound}. Ultimately, using the optimal $k_{\mathrm{opt}}$ given in Theorem~\ref{thm:errorbound_polylog}, we achieve $d' = O\!\left(\log^m \tfrac{d}{\REV{\varepsilon}(d)}\right)$ with error at most $\REV{\varepsilon}(d)$. 
\end{proof}

\section{An example of the moment-matching compression algorithm}\label{app:algorithm}

Here, we describe the concrete algorithm for compression that is used in all our numerical simulations. 

Recall that in Algorithm~\ref{alg:general} we identified two main steps of the algorithm: (1) clustering and (2) moment matching. We first describe our moment matching strategy. For moment-matching, we consistently use Algorithm~\ref{alg:reduce}, in which $\operatorname{vect}(\cdot)$ represents flattening all the moments into a vector. We remark that the existence of Algorithm~\ref{alg:reduce} is effectively a constructive proof of Tchakaloff's theorem \ref{thm:tchakaloff}. 

Our actual clustering strategy is slightly more involved, because finding the smallest cluster in $d\gg 1$ points is known to be NP hard. We implement a coarser $k$-means clustering instead. Only when $|\supp(c)|$ becomes close to the desired stopping size $d'$, we switch to a greedy strategy, since the diameters of clusters are likely to be large when $|\supp(c)|$ is small. Concretely, in a $k$-means round, we divide $\theta$ into $\propto |\supp(c)|/N_{m,k}$ clusters. Then we apply Algorithm~\ref{alg:reduce} to each cluster containing more than $N_{m,k}$ objects in parallel. In a greedy round, we find the approximately smallest cluster of $N_{m,k}+1$ points, which is implemented using the $k$-nearest neighbor algorithm provided by the \texttt{faiss} package \citep{douze2024faiss}. 

For the above $k$-means/greedy hybrid strategy, we present the runtime benchmark in Fig.~\ref{fig:runtime}. The calculation is conducted on a personal computer with Apple M4 Pro CPU. The runtime is observed to be roughly proportional to $d$. This is because the number of iterations over moment-matching reduction of support is proportional to $d$. 

Finally, we remind the reader that our error bound Theorems~\ref{thm:compression_errorbound} and \ref{thm:errorbound_polylog} are proved for the greedy strategy, where in each round one finds the smallest cluster of $N_{m,k}+1$ objects and reduce one object. For the above-mentioned hybrid clustering strategy, there is no theoretical guarantee as strong as Theorem~\ref{thm:compression_errorbound} for the error, but all numerical simulation turns out to meet our expectation. 

\begin{algorithm}
\caption{Reducing support while matching moments}\label{alg:reduce}

\SetKwInOut{Input}{Input}
\SetKwInOut{Output}{Output}
\SetKw{function}{function}

\Input{Moment matching order $k$}
\Input{$N$ weighted parameters $\{(c_j, w_j)\}_{j=1}^{N}$, where $N_{m,k}=\binom{m+k}{k}$}
\Output{Adjusted weights $\{c_j\}$ where $|\supp(c)| \leq N_{m,k}$}

\function $\phi(w) = \operatorname{vect}(1, w, w^{\otimes 2}, \dots, w^{\otimes k})^\top$ \quad \tcc*[h]{$\dim \phi(w)=N_{m,k}$}\;

\While{$|\supp(c)| > N_{m,k}$}{
    Let $\supp(c) = \{ j_1, \dots, j_{|\supp(c)|} \}$; 
    $
    A = 
    \begin{pmatrix}
        \phi(w_{j_1}) & \phi(w_{j_2}) & \cdots & \phi(w_{j_{|\supp(c)|}})
    \end{pmatrix}
    $\;
    Find a nonzero $v\in \mathbb{R}^{|\supp(c)|}$ such that $Av = 0$; Ensure that at least one $v_j>0$ \;
    $t = \min_{j\in\supp(c): v_j>0} c_j/v_j$\;
    $c_j\leftarrow c_j - tv_j$. 
}
\end{algorithm}

\begin{REV*}

\begin{figure}
    \centering
    \includegraphics[width=0.5\linewidth]{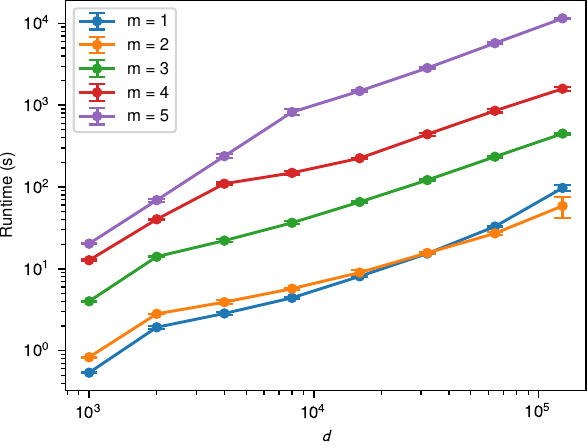}
    \caption{Runtime benchmark of the hybrid compression algorithm. See description in Sec.~\ref{app:algorithm}. The original dataset is i.i.d. uniformly sampled in an $m$-dimensional cube. The moment-matching order is $k=5$ for all trials in this plot. Each marker on the plot is an average over $5$ trials, with error bar representing one standard deviation. }
    \label{fig:runtime}
\end{figure}

\end{REV*}








\section{Details on numerical experiments}\label{app:numerical}

We studied two main numerical tasks in Figs.~\ref{fig:compress_trainds}, \ref{fig:LTH} and \ref{fig:NSL}. For all these experiments, the loss function in training is the mean squared error (MSE) loss. The test loss shown in figures is the MSE loss evaluated on a randomly sampled dataset of the form $(x_1, x_2, f(x_1, x_2))$, where $f(x_1, x_2)$ is the ground truth function. For Figs.~\ref{fig:compress_trainds} and \ref{fig:LTH}, the test dataset size is $10^5$, and for Fig.~\ref{fig:NSL} it is $2\times 10^4$. All unspecified training hyperparameters follow PyTorch defaults. In particular, for AdamW, they are $\beta_1 = 0.9$, $\beta_2=0.999$, (weight decay) $\lambda = 0.01$, and $\epsilon = 10^{-8}$. 

Fig.~\ref{fig:compress_trainds} and \ref{fig:NSL}(a) concerns compressing the training dataset. The task is function fitting in a teacher-student setup, described in Sec.~\ref{ssec:lossless_cp}. When the training dataset is weighted, the data loader is implemented as follows: We draw i.i.d.\ samples from $\{w_j\}_{j\in[d']}$, using $\{c_j\}_{j\in[d']}$ as the unnormalized probability distribution, to form a mini-batch. 

Fig.~\ref{fig:LTH} and \ref{fig:NSL}(b) concerns compressing the width of a two-layer neural network. The task is fitting an oscillating bivariate function known as a cylindrical harmonic, described in the caption of Fig.~\ref{fig:LTH}. The training dataset size for Fig.~\ref{fig:LTH} is $10^5$, and for Fig.~\ref{fig:NSL} is $2\times 10^4$. 

In Fig.~\ref{fig:NSL}, we show the test MSE loss scaling with respect to training dataset size (a) and neural network width (b). For both (a) and (b), the update rule is AdamW. The learning rate is initially $0.001$, and is modulated by a cosine annealing learning rate scheduler, reaching $0$ at the final epoch. Each data point is obtained by averaging $10$ random instances of the train and test dataset and the neural network initialization, and the error bars indicate the standard deviation. For (a), we train each instance for $2048$ epochs, each epoch containing one mini-batch of size $512$, so that there is a constant compute budget for the original and the compressed datasets. For (b), we train each instance for $2000$ epochs, each epoch enumerates over the train dataset. The batch size is $128$.

\section{Permutation symmetry in attention modules}
\label{app:attention}

\begin{REV*}

In principle, the compression theory developed in this work can be applied to attention mechanisms in two distinct ways. We briefly alluded to these ideas in Section~2; here, we provide a self-contained and more detailed discussion. The two applications concern \((1)\) the compression of the query and key weight matrices, and \((2)\) the compression of attention heads.

\paragraph{Compression of query and key matrices.}
The first application serves as a straightforward verification of the theory. Consider the query and key matrices \(W_Q\) and \(W_K\). The attention logits depend only on their product, which can be written as
\begin{equation}
a = a\!\left(W_Q W_K\right)
\end{equation}
with
\begin{equation}
W_Q W_K = \sum_{i=1}^d w_Q^i \left(w_K^i\right)^{T},
\end{equation}
where \(w_Q^i\) denotes the \(i\)-th row of \(W_Q\) and \(w_K^i\) the \(i\)-th column of \(W_K\). Since the index \(i\) is a dummy summation index, its ordering is immaterial. This reveals an explicit permutation symmetry among the pairs \(\{(w_Q^i, w_K^i)\}_{i=1}^d\).

Because the output depends only on the sum of these outer products, the symmetry implies that the collection of these row–column pairs can be compressed. Moreover, if the left dimension of \(W_Q\) is \(m\), then \(W_Q W_K\) has rank at most \(m\), and the effective number of degrees of freedom is independent of \(d\). Consequently, one can achieve not merely a \(\operatorname{polylog}(d)\) compression but, in fact, a constant-size representation. This serves as a useful sanity check for the consistency of our general theory.

A more interesting direction arises when the key–query interaction becomes nonlinear. For instance, one may consider replacing the bilinear form with
\begin{equation}
\sum_{i=1}^d w_Q^i s\!\left(w_K^i, X\right)^{T},
\end{equation}
where \(s\) is a nonlinear function and \(X\) denotes the input data. In this setting, the permutation symmetry persists, and our theory guarantees that an \(\operatorname{polylog}(d)\)-size compressed representation of this nonlinear attention layer is achievable in principle.

\paragraph{Compression of attention heads.}
The second application concerns the compression of entire attention heads, which is intrinsically more meaningful. Consider an attention layer with \(d\) heads, and let
\begin{equation}
A_i = B(w_i, X)
\end{equation}
denote the output of the \(i\)th head, where \(w_i\) denotes its trainable parameter (including the query/key/value weight matrices $W_{Q/K/V, i}$ in the $i$th head) and \(B\) the head-level transformation. The standard output (denoted by $y$, of the attention layer is
\begin{equation}
y = U\,\operatorname{concat}\!\left(A_1,\ldots,A_d\right),
\end{equation}
where \(U \in \mathbb{R}^{z \times dh}\) is the output projection matrix, $z$ is the dimension of the final output, and \(h\) is the dimension of each head output. Partitioning \(U\) into blocks \(U_i \in \mathbb{R}^{z \times h}\), this expression becomes
\begin{equation}
y = \sum_{i=1}^d U_i B(w_i, X).
\end{equation}

This summation structure again exposes a permutation symmetry: the parameters
\begin{equation}    \label{eq:mha_object}
\theta_i = (U_i, w_i)
\end{equation}
enter the model only through their sum over \(i\), and their ordering is irrelevant. By the general theory, any such collection of \(d\) symmetric objects admits a compressed representation of size \(O\!\left(\operatorname{polylog}(d)\right)\). Hence, the entire set of attention heads can be compressed to \(\operatorname{polylog}(d)\) effective parameters while preserving the functional form of the output. 

\begin{figure}
    \centering
    \includegraphics[width=0.5\linewidth]{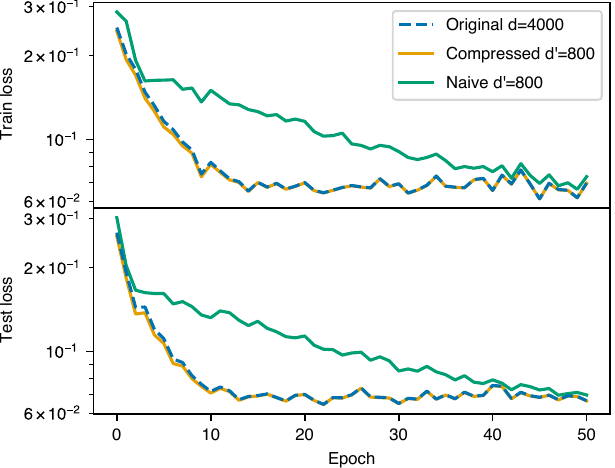}
    \caption{Dynamical LTH extended to transformers. The training dynamics of a large, $d_{\mathrm{heads}}=4000$-head attention model shows good agreement with its compressed multi-head attention model with $d'_{\mathrm{heads}}=800$ heads. See description of the task in Appendix~\ref{app:attention}. }
    \label{fig:attention}
\end{figure}

With the permutation symmetry among heads, it is easy to formulate a similar dynamical LTH for multi-head attention. Figure~\ref{fig:attention} is a numerical demonstration of LTH in transformers. Here, the task is in-context learning on random piecewise-linear functions. We consider a scalar in-context regression task in which each episode defines a random continuous piecewise-linear function $f:[0,1]\to\mathbb{R}$. For a given episode, we first draw an initial value $f_0 \sim \mathcal{N}(0,\sigma_{f_0}^2)$ and segment slopes $s_j \sim \mathcal{N}(0,\sigma_s^2)$ for $j=0,\dots,K-1$, with $K=16$, $\sigma_{f_0}=0.5$ and $\sigma_s=1.0$. The interval $[0,1]$ is partitioned into $K$ equal sub-intervals of length $1/K$, and $f$ is defined by enforcing continuity and setting the slope on segment $j$ to $s_j$. For each episode we sample $n_{\mathrm{ctx}}=8$ context inputs $x_i \sim \mathrm{Unif}[0,1]$ with noisy observations $y_i = f(x_i) + \varepsilon_i$, where $\varepsilon_i \sim \mathcal{N}(0,\sigma_{\mathrm{noise}}^2)$ with $\sigma_{\mathrm{noise}} = 0.3$, together with an additional query point $x_\ast \sim \mathrm{Unif}[0,1]$ and a clean target $y_\ast = f(x_\ast)$. The episode is presented to the model as a scalar sequence of tokens $[x_1, y_1, \dots, x_{n_{\mathrm{ctx}}}, y_{n_{\mathrm{ctx}}}, x_\ast] \in \mathbb{R}^{2n_{\mathrm{ctx}}+1}$ (token dimension $d_{\mathrm{in}}=1$), which is processed by a single-layer causal multi-head attention module with $d_{\mathrm{heads}} = 4000$ heads and per-head dimension $d_{\mathrm{head}} = 2$. The model outputs a scalar prediction $\hat{y}_\ast$ from the final token position. We compare three variants that share the same initialization: (i) the full model with all $4000$ heads, (ii) a compressed model obtained by reducing the number of heads to $d'_{\mathrm{heads}} = 800$ via compression of order $k=3$ (the effective dimension of each symmetric object is $m=8$), and (iii) a naive head-pruned model where $800$ heads are selected uniformly at random and the remaining heads are discarded. All three models are trained with Adam (learning rate $10^{-4}$, batch size $256$) on the same sequence of mini-batches, using $5$ gradient steps per epoch for $50$ epochs. At epoch $0$ (before training) and after each epoch we report the MSE loss, for both training-like and test-like evaluations.

\end{REV*}

\begin{REV*}

\section{Numerical compression to $\operatorname{polylog}(d)$}

In this section, we numerically demonstrate the possibility to the optimal rate, that is, compressing $d$ objects into $O(\log^m d)$ objects. As we argued in Theorem \ref{thm:errorbound_polylog} and Appendix \ref{app:algorithm}, compressing to this rate is computationally heavy for the moment-matching algorithm, so we only show it in low dimensions, i.e., $m=1$ or $2$. 

\begin{figure}
    \centering
    \includegraphics[width=0.75\linewidth]{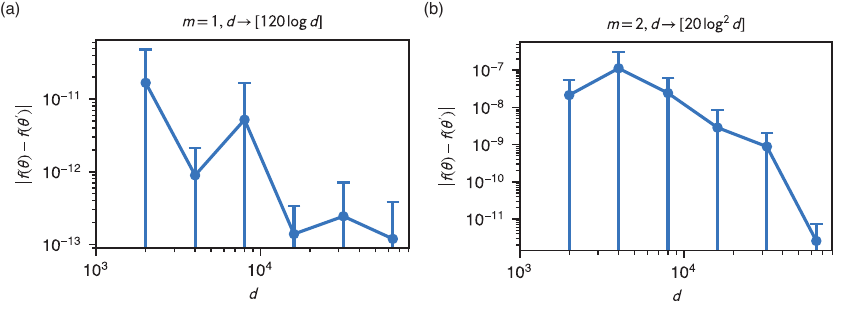}
    \caption{Error scaling of compressing a general symmetric function using the moment-matching method. Here, various different values of $k$ (the order of moment matching) are attempted, from small to large, until the smallest error is found. Each data point is an average over $5$ random instances, plotting the average with error bar standing for one standard deviation. }
    \label{fig:error_scaling_polylog}
\end{figure}

The error scaling is shown in Fig.~\ref{fig:error_scaling_polylog}. The function we study is 
\begin{equation}
    f(w_1, \dots, w_d) = \frac{1}{d} \sum_{i=1}^{d} \frac{1}{10}\sum_{a=1}^{10} \frac{1}{A+\langle w_i, x_a \rangle}  .
\end{equation}
For $m=1$ (Fig.~\ref{fig:error_scaling_polylog}(a)), we use $A=1.05$; for $m=2$ (Fig.~\ref{fig:error_scaling_polylog}(b)), we use $A=2.05$. In Fig.~\ref{fig:error_scaling_polylog}(a), we plot the error of compressing $d$ 1d random objects into $[120\log d]$; in Fig.~\ref{fig:error_scaling_polylog}(b), we plot the error of compressing $d$ 2d objects into $[20\log^2 d]$. All unspecified setting of this numerical experiment is identical with that of Fig.~\ref{fig:error_scaling}. Despite pruning bigger fraction of the objects when $d$ increases, we see that the error is not increasing with $d$, but rather overall vanishing with $d$. Thus, this shows that it is possible to compress $d$ to $O(\log^m d)$ objects losslessly. However, the numerical error shows visible oscillation, possibly due to the volatile moment-matching order $k_{\mathrm{opt}}$ and finite-size ($d$) effect. 
    
\end{REV*}

\section{Use of large language models}

ChatGPT is used in revising the language of the main text. All ideas, theorems and derivations are formulated by the authors. 

ChatGPT and OpenAI Codex are used in partially completing the code for numerical experiments. However, all the main algorithms are designed and inspected by the authors.

\end{document}